\documentclass{article}

\PassOptionsToPackage{compress, round}{natbib}
\usepackage[main, final]{neurips_2026}

\makeatletter
\renewcommand{\@noticestring}{}
\makeatother

\usepackage[utf8]{inputenc} 
\usepackage[T1]{fontenc}    
\usepackage{hyperref}       
\usepackage{url}            
\usepackage{booktabs}       
\usepackage{amsfonts}       
\usepackage{nicefrac}       
\usepackage{microtype}      
\usepackage{xcolor}         

\usepackage{amsmath}
\usepackage{amssymb}
\usepackage{amsthm}
\usepackage{mathtools}
\usepackage{thmtools}
\usepackage{thm-restate}
\usepackage{physics}

\usepackage{siunitx}

\usepackage{wrapfig}
\usepackage{caption}

\usepackage{lipsum}
\usepackage{enumitem}
\usepackage{eucal}
\usepackage{algorithm}
\usepackage{algpseudocode}
\usepackage{cleveref}

\declaretheorem[name=Remark]{remark}

\renewcommand{\P}{\mathcal{P}}
\newcommand{\Q}{\mathcal{Q}}
\newcommand{\R}{\mathbb{R}}
\newcommand{\KL}[2]{\mathcal{D}\qty[#1\, |\, #2]}
\newcommand{\W}{\mathcal{W}}
\newcommand{\T}{\mathcal{T}}
\newcommand{\C}{\mathcal{C}}

\newcommand{\one}{\mathbf{1}}

\DeclareMathOperator*{\minimize}{minimize}
\DeclareMathOperator*{\argmin}{arg\ min}
\newcommand{\define}{\coloneqq}

\title{Variational Inference via Entropic Transport Descent}

\author{%
  Vincent Pacelli\\
  School of Aerospace Engineering\\
  Georgia Institute of Technology\\
  Atlanta, GA 30332 \\
  \texttt{vincent@pacel.li}
  \And
  Akash Ratheesh\\
  School of Aerospace Engineering\\
  Georgia Institute of Technology\\
  Atlanta, GA 30332 \\
  \texttt{akashratheesh@gatech.edu}
  \And
  Evangelos Theodorou\\
  School of Aerospace Engineering\\
  Georgia Institute of Technology\\
  Atlanta, GA 30332 \\
  \texttt{evangalos.theodorou@gatech.edu}
}

\crefalias{subappendix}{section}

\begin{document}

\maketitle

\begin{abstract}
Particle-based variational inference (ParVI) methods approximate
an intractable target distribution by evolving an ensemble of
interacting samples.  Existing approaches rely predominantly on
kernel-based repulsion (e.g., SVGD), which suffers from
\emph{variance collapse} in high dimensions and \emph{mode
collapse} on multimodal targets---pathologies caused by the
absence of global transport structure.  We introduce
\emph{entropic transport descent} (ETD), a ParVI family that
frames each particle update as an entropy-regularized optimal
transport problem.  Derived from the JKO proximal scheme by
lifting to the space of couplings and relaxing via the KL chain
rule, each ETD iteration reduces to a Sinkhorn computation.  The
resulting transport plan provides global coordination, guiding
each particle to nearby high-density proposals and naturally
preserving multimodal structure.  ETD can operate entirely
score-free, requiring only pointwise evaluations of the
unnormalized target density.  Experiments on variance-collapse
diagnostics, Bayesian logistic regression, neural networks, and
molecular Boltzmann distributions show that ETD matches or
outperforms SVGD, AGF-SVGD, and SGLD, with the largest gains in
high-dimensional and multimodal settings.
\end{abstract}

\section{Introduction}
Approximate sampling from an intractable target distribution $\pi(x) \propto \exp(-V(x))$ is a core computational task in Bayesian inference~\citep{Gelman95}, generative modeling~\citep{Kingma21}, and scientific simulation~\citep{VonToussaint11}. Particle-based variational inference (ParVI) methods are a non-parametric approach to variational inference (VI) that addresses this task by evolving an ensemble of $N$ interacting particles such that their empirical distribution converges to $\pi$. Unlike parametric methods, ParVI methods make no restrictive assumptions on the form of the approximating distribution, and unlike Markov chain Monte Carlo (MCMC), they are typically parallelizable, do not require long burn-in periods on multimodal targets to mix, and  yield an \emph{ensemble} of samples.


This paper introduces \emph{entropic transport descent} (ETD),
a family of ParVI algorithms whose inter-particle interaction is
mediated by entropic optimal transport.  Building on the JKO
proximal scheme~\citep{Jordan98}, each iteration solves an
entropic OT problem that assigns every particle a conditional
distribution over a shared pool of target-weighted proposals.
A single parameter $\tau \geq 0$ controls coupling fidelity,
interpolating from a closed-form solution to one that enforces
the target marginal exactly.  ETD can operate score-free---requiring
only pointwise evaluations of the unnormalized target---or
incorporate score information through the proposal distribution.

\paragraph{Contributions.}
(\emph{i})~We introduce ETD, a family of ParVI algorithms
derived from the JKO proximal scheme~\citep{Jordan98} by
lifting the optimization to the space of couplings and
relaxing via the KL chain rule, reducing each iteration
to an entropic OT problem solvable by the Sinkhorn
algorithm (\Cref{sec:etd}).
(\emph{ii})~We characterize the stationary distribution of
balanced ETD exactly (\Cref{thm:stationary}), showing the bias is independent of both the transport cost and the entropic regularization, and provide importance-corrected
target weights that make $\pi$ exactly stationary (\Cref{thm:debiasing}).
(\emph{iii})~Experiments on variance-collapse diagnostics, Bayesian logistic regression, Bayesian neural networks, and sampling from multimodal energy functions demonstrate that ETD matches or outperforms existing ParVI baselines, including both score-based and score-free methods, with the largest gains in high-dimensional and multimodal settings
(\Cref{sec:experiments}).
\section{Background}
\label{sec:background}

\paragraph{Definitions and Notation.} Let $\P(\R^n)$ denote the space of \emph{probability distributions} on $\R^n$ absolutely continuous with respect to Lebesgue measure; all distributions are assumed to have finite second moments. No distinction is made between a probability measure and its density. The \emph{Kullback--Leibler} (KL) \emph{divergence} between $\mu, \pi \in \P(\R^n)$ (with $\mu$ absolutely continuous with respect to $\pi$) is denoted $\KL{\mu}{\pi}$. We write $\delta_x$ for the \emph{Dirac measure} centered at $x$, $\Delta^N$ for the \emph{probability simplex} in $\R^N$, $\mu \otimes \nu$ for the \emph{product measure}, and $\oslash$ for \emph{elementwise division}. The inner product between two matrices $A, B \in \R^{n \times m}$ is denoted: \mbox{$\langle A, B \rangle = \sum_{ij} A_{ij} B_{ij}$}. 
The vector $\mathbf{1}_n \in \mathbb{R}^n$ is the vector for which all entries are one. 
A \emph{coupling} of distributions $\mu, \nu \in \P(\R^n)$ is a joint distribution $\gamma$ on $\R^n \times \R^n$ with marginals $\mu$ and $\nu$; we denote the set of all such couplings $\C[\mu, \nu]$. When only the source marginal is fixed, we write $\C(\mu) \define \bigcup_{\nu \in \P(\R^n)} \C[\mu, \nu]$.

\paragraph{Optimal Transport.}
Optimal transport (OT) provides a geometry on probability
measures by quantifying the cost of moving mass between
distributions. Given a transport cost
$c : \R^n \times \R^n \to \R_{\geq 0}$, the OT cost between
$\mu, \nu \in \P(\R^n)$ is
$\T_c[\mu, \nu] = \inf_{\gamma \in \C[\mu, \nu]}
\int c\, \dd\gamma$, where the infimum ranges over all
couplings with marginals $\mu$ and $\nu$. The \emph{2-Wasserstein distance} $\W_2[\mu, \nu]$ corresponds to $\sqrt{\T_c[\mu, \nu]}$ with
$c(x,y) = \tfrac{1}{2}\|x - y\|^2$. When both marginal constraints are enforced, the problem is
\emph{balanced}; when one or both are relaxed to KL penalties,
it is \emph{unbalanced}; and when only the source marginal is
constrained while the target enters as a reference measure, it
is \emph{semi-relaxed}. Adding an entropic regularization term
$\varepsilon\, \KL{\gamma}{\mu \otimes \nu}$ to the objective
yields \emph{entropic} OT (EOT), which is strictly convex
and solvable via the Sinkhorn algorithm~\citep{Cuturi13}, and whose value is denoted $\T_c^{\varepsilon}[\mu, \nu]$.
Background on OT and EOT, its discrete specialization, and the
Sinkhorn algorithm are provided in
Appendix~\ref{app:ot}.

\paragraph{Variational Inference.} Many statistical methods, e.g., Bayesian inference, require sampling from an intractable \emph{target distribution} $\pi(x) \propto \exp(-V(x))$. In such cases, \emph{variational inference} (VI) is a popular approach to approximately sample from $\pi$ by identifying a family of \emph{tractable} distributions, $\Q \subset \P(\R^n)$, then identifying the closest member of the family to $\pi$ as measured by the KL divergence \citep{Blei17}:
\begin{align}
    \mu^\star\ \define\ \argmin_{\mu \in \Q} \quad \KL{\mu}{\pi}. \tag{VI}\label{eq:vi}
\end{align}
Particle VI (ParVI) methods are a non-parametric approach that selects the tractable family  $\Q$ to be represented by a finite number of interacting particles $\mathbf{z} = (z_i)_{i = 1}^N$---thereby reducing an infinite-dimensional optimization problem over distributions to a finite one. Typically, as in the sequel, particles are treated as individual samples and the tractable family consists of the \emph{empirical distributions}: \mbox{$\mu(x; \mathbf{z}) = \frac{1}{N} \sum_{i = 1}^N \delta_{z_i}(x)$}. Particles are then evolved according to an update law designed so that their mutual interactions drive them to be approximately distributed according to $\pi$. Thus, the choice of interaction mechanism defines many of a ParVI algorithm's properties.


\paragraph{The JKO Scheme.} The probability density $\mu_t \in \P(\R^n)$ of the \emph{overdamped Langevin equation} (OLE) driven by a standard Wiener process $W_t$,
\begin{align}
    \dd X_t = -\partial_x V(X_t)\, \dd t + \sqrt{2}\, \dd W_t, \label{eq:langevin}
\end{align}
is the backbone of many sampling and inference algorithms. Under mild
regularity conditions, $\mu_t$ evolves according to the Fokker--Planck
equation (FPE),
\begin{align}
    \partial_t \mu_t = \partial_x \cdot (\mu_t \partial V) + \partial_x \cdot \partial_x \mu_t,
    \label{eq:fpe}
\end{align}
and converges to the target density
$\pi(x) \propto \exp(-V(x))$ as $t \to \infty$. The \emph{score function} is the gradient of the target log-density: $\partial_x\log \pi(x) = -\partial_x V(x)$.

Jordan, Kinderlehrer, and Otto~\citep{Jordan98} showed that
\eqref{eq:fpe} is the gradient flow of $\KL{\cdot}{\pi}$ with
respect to $\W_2$ on $\P(\R^n)$. This structure yields
a natural implicit discretization---the \emph{JKO scheme}:
\begin{align}
    \mu_{k+1} = \argmin_{\mu \in \P(\R^n)} \quad
    \KL{\mu}{\pi} \;+\; \frac{1}{2\tau}\, \W_{2}^2[\mu, \mu_k],
    \label{eq:jko}
\end{align}
where $\tau > 0$ is the step size. Each step advances $\mu_{k+1}$
toward $\pi$ while penalizing large displacements from
$\mu_k$; as $\tau \to 0$, the iterates recover~\eqref{eq:fpe}.

The JKO scheme is elegant but computationally demanding---each
iteration requires solving an optimal transport problem. Even in simple discrete settings, solving such problems has a complexity $O(N^3 \log N)$.
\Cref{sec:etd} develops a tractable upper bound
on~\eqref{eq:jko} by introducing entropic regularization, which produces a strictly convex transport problem that is solved via the Sinkhorn algorithm~\citep{Cuturi13}.

\begin{figure}[t]
    \centering
    \includegraphics[width=\textwidth]{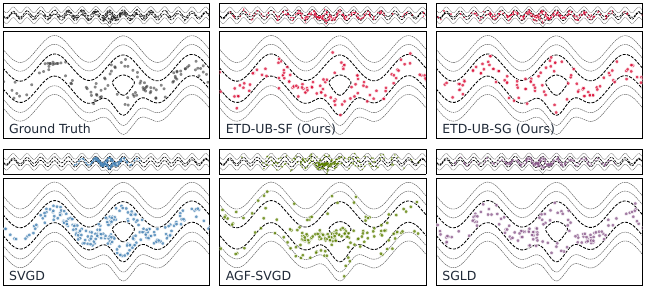}
    \caption{\small Approximate samples from the U3 energy potential
  of \citet{Rezende15} ($N = 250$ particles, 5 seeds pooled).
  Top strips show a wide view; bottom panels focus on the central
  multimodal region.  SVGD and AGF-SVGD both suffer from variance
  collapse.  SGLD covers both modes but provides sparser coverage
  of the target structure with the same particle budget.  Both
  score-guided (ETD-UB-SG) and score-free (ETD-UB-SF) variants of
  ETD capture the bimodal structure and match the dispersion of the
  target, outperforming all baselines in energy distance
  (Appendix~\ref{app:experiment_setup}).}
    \label{fig:u3}
    \vspace{-1em}
\end{figure}

\section{Related Work}
\label{sec:related work}
%

\paragraph{Particle-Based Variational Inference.}
SVGD~\citep{Liu16} evolves particles along a kernelized Stein gradient flow in an RKHS.
While foundational, its kernel-mediated repulsion weakens in high
dimensions, leading to variance collapse~\citep{Zhuo18,DAngelo21} and mode
collapse on multimodal targets.
\citet{Liu19} and \citet{Chewi20} clarified the Wasserstein gradient flow
structure underlying SVGD, with LAWGD providing uniform ergodicity guarantees.
Subsequent methods replace kernel repulsion with alternative interaction mechanisms:
mollified interaction energies~\citep{Li23}, blob
methods from PDE numerics~\citep{Craig16,Craig23}, and
semi-discrete OT~\citep{Ambrogioni18}.
ETD departs from these approaches by mediating particle interactions through
entropic transport couplings---each particle's update depends on the full
ensemble via a shared transport plan---and by supporting fully score-free operation.

\paragraph{Wasserstein Gradient Flows and JKO Schemes.}
Neural JKO methods~\citep{Mokrov21,AlvarezMelis22,Fan22,Bunne22}
make proximal Wasserstein steps tractable by training a network per
iteration. \citet{Peyre15} introduced the entropic JKO scheme, with
convergence established by \citet{Carlier17}.
The iterated Schr\"{o}dinger bridge method~\citep{Agarwal24} is the
closest antecedent: it approximates JKO steps via entropic OT
without scores or neural networks, but requires target samples and
uses a single coupling regime. ETD instead generates local proposals
from pointwise evaluations of $\pi$ and provides the $\tau$-family
(Proposition~\ref{prop:coupling}) trading cost for coupling fidelity.
\citet{Lambert22} developed Bures--Wasserstein VI for Gaussian
families; \citet{Bonet24} analyzed preconditioned Wasserstein
gradient descent; and \citet{Hardion25} studied gradient flows in
Sinkhorn divergence geometry.

\paragraph{Entropic Optimal Transport.}
ETD's coupling step builds on the Sinkhorn algorithm~\citep{Cuturi13}, with
log-domain stabilization~\citep{Schmitzer19} enabling numerically stable
computation at small $\varepsilon$.
The Sinkhorn divergence~\citep{Feydy19,Genevay18} debiases entropic OT by
subtracting self-transport terms;
for balanced ETD, we instead eliminate bias via importance-corrected target
weights (\Cref{thm:debiasing}).
The unbalanced formulations underlying ETD's $\tau$-family extend the scaling
algorithms of \citet{Chizat18} and the
framework of \citet{Sejourne22}.


\paragraph{Transport-Based Sampling and Generative Modeling.}
Deterministic transport maps for Bayesian
inference~\citep{ElMoselhy12,Spantini18} avoid MCMC but require learning
triangular maps.
Flow matching~\citep{Lipman23,Tong24} and Schr\"{o}dinger bridge
methods~\citep{DeBortoli21,Vargas24} solve dynamic OT problems on path space,
achieving strong results in high-dimensional generative modeling.
These methods amortize computation through neural networks trained across
queries.
ETD operates in the finite-particle regime without neural network inner loops,
trading amortization for applicability when the target changes at every
iteration (e.g., stochastic optimal control~\citep{Pacelli26}) or when
computation is limited.

\section{Variational Inference via Entropic Transport}
\label{sec:etd}
This section develops a family of tractable approximations to the JKO scheme, i.e., the ETD methods. After lifting the intractable optimization over distributions to the space of couplings, the objective is approximated by bounding the KL divergence. The result is a family of EOT problems, parameterized by a single scalar $\tau \geq 0$, the solutions of which are computed using the Sinkhorn algorithm---and form the updates for our particle method. 


\paragraph{Transport-Regularized Variational Inference.}
\label{sec:trvi}

Given a current approximation $\mu \in \P(\R^n)$, the JKO
scheme~\eqref{eq:jko} produces the next iterate by minimizing
the KL divergence to the target subject to a transport penalty. We write this:
\begin{align}
    \minimize_{\nu \in \P(\R^n)} \quad
    \KL{\nu}{\pi} \;+\; \frac{1}{\varepsilon}\,
    \T_c[\nu, \mu],\label{eq:trvi}
\end{align}
where $\varepsilon^{-1} > 0$ plays the role of a Lagrange
multiplier (equivalently, $\varepsilon$ is a step-size parameter).
Iterating~\eqref{eq:trvi} produces a sequence of distributions
converging to $\pi$; as $\varepsilon \to 0$, the iterates recover
the JKO scheme with step size $\tau = \varepsilon$.

Problem~\eqref{eq:trvi} is a principled starting point---it
inherits the convergence guarantees of the Wasserstein gradient
flow---but it requires optimizing over distributions, which is
generally intractable. The transport cost $\T_c[\nu, \mu]$ itself
involves an inner optimization over couplings,
compounding the difficulty. We address both issues simultaneously
by lifting the optimization to the space of couplings.

\paragraph{Lifting to Couplings.}
\label{sec:lifting}

Any coupling $\gamma \in \P(\R^n \times \R^n)$ with source marginal $\mu$ induces a target marginal $\nu_\gamma(\cdot) = \gamma(\R^n \times \cdot)$. By optimizing over $\gamma \in \C(\mu)$ rather than $\nu$ directly, the transport cost is absorbed into the coupling
and the inner optimization in $\T_c$ disappears. Problem~\eqref{eq:trvi} becomes
\begin{align}
    \minimize_{\gamma \in \C(\mu)} \quad
    \KL{\nu_\gamma}{\pi} \;+\; \frac{1}{\varepsilon}
    \int c(x, y)\, \dd\gamma(x, y).
    \label{eq:lifted}
\end{align}
This formulation optimizes simultaneously over the next
distribution \emph{and} the transport plan to reach it.
However, the KL term still couples all components of $\gamma$
through the induced marginal $\nu_\gamma$, limiting tractability.
The next section resolves this via a variational relaxation.

\paragraph{Relaxation via the Chain Rule}
\label{sec:relaxation}

The main step in developing a tractable algorithm is decomposing the KL divergence into a coupling and a product reference measure using the KL chain rule (proof in Appendix~\ref{app:proofs}).

\begin{restatable}[Chain Rule for Couplings]{proposition}{chainrule}
\label{lem:chainrule}
Let $\gamma \in \C(\mu)$ and denote its induced target marginal
$\nu_\gamma(\cdot) = \gamma(\R^n \times \cdot)$. Then, for $\pi \in \P(\R^n)$:
\begin{align}
    \KL{\gamma}{\mu \otimes \pi}
    \;=\; \KL{\gamma}{\mu \otimes \nu_\gamma}
    \;+\; \KL{\nu_\gamma}{\pi}.
    \label{eq:chainrule}
\end{align}
\end{restatable}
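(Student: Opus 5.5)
The plan is to verify \eqref{eq:chainrule} by a direct Radon--Nikodym density factorization. The only ingredient is that the first marginal of $\gamma$ equals the first marginal of both reference measures $\mu\otimes\nu_\gamma$ and $\mu\otimes\pi$. We split into cases according to whether the left-hand side is finite.

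Suppose first that $\KL{\gamma}{\mu\otimes\pi}<\infty$, so $\gamma\ll\mu\otimes\pi$.
\begin{enumerate}
\item Pushing forward by the second-coordinate projection gives $\nu_\gamma\ll\pi$. Set $r=\dd\nu_\gamma/\dd\pi$.
\item This yields $\mu\otimes\nu_\gamma\ll\mu\otimes\pi$ with density $(x,y)\mapsto r(y)$.
\item Define $q=\dd\gamma/\dd(\mu\otimes\nu_\gamma)$. Existence of $q$ needs $\gamma\ll\mu\otimes\nu_\gamma$. If $(\mu\otimes\nu_\gamma)(A)=0$, then $\int_A r(y)\,\dd(\mu\otimes\pi)=0$, so $r=0$ $(\mu\otimes\pi)$-a.e.\ on $A$. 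Hence $A\subset\{r=0\}$ up to a $(\mu\otimes\pi)$-null set, and thus also up to a $\gamma$-null set. Finally $\gamma(\R^n\times\{r=0\})=\nu_\gamma(\{r=0\})=0$, so $\gamma(A)=0$.
\item The chain rule for densities gives $\dd\gamma/\dd(\mu\otimes\pi)(x,y)=q(x,y)\,r(y)$ $\gamma$-a.e.
\item Taking logarithms and integrating against $\gamma$ gives
\[
\int\log\frac{\dd\gamma}{\dd(\mu\otimes\pi)}\,\dd\gamma=\int\log q\,\dd\gamma+\int\log r(y)\,\dd\gamma(x,y).
\]
The last integral depends only on $y$, so it equals $\int\log r\,\dd\nu_\gamma=\KL{\nu_\gamma}{\pi}$, which gives \eqref{eq:chainrule}.
\end{enumerate}

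The step I expect to be the main obstacle is justifying the splitting of the integral, since one must avoid an $\infty-\infty$ situation. I would handle it with the standard bound $\log^- t\le 1/t$ in a change-of-measure form. The negative parts of $\log q$ and $\log r$ are integrable, because $\int(\log q)^-\,\dd\gamma\le\int q\cdot q^{-1}\mathbf{1}_{q<1}\,\dd(\mu\otimes\nu_\gamma)\le 1$, and similarly for $r$. Both KL divergences are therefore well defined in $[0,\infty]$, and the sum identity holds in the extended sense. In particular, both right-hand terms are finite whenever the left side is.

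Now suppose $\KL{\gamma}{\mu\otimes\pi}=\infty$. If $\nu_\gamma\not\ll\pi$, the right side is $\infty$ through its second term. If instead $\nu_\gamma\ll\pi$ but $\gamma\not\ll\mu\otimes\pi$, then $\gamma\not\ll\mu\otimes\nu_\gamma$ either, because $\mu\otimes\nu_\gamma\ll\mu\otimes\pi$. The first term is then $\infty$. In the remaining subcase both relations hold, and the same density computation as above, with the extended-real bookkeeping, shows the right side is infinite. So equality holds in $[0,\infty]$ in every case.

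Alternatively, the whole argument could be phrased via disintegration $\gamma(\dd x,\dd y)=\nu_\gamma(\dd y)\,\gamma(\dd x\mid y)$ and the standard KL chain rule. The density argument above is more self-contained.
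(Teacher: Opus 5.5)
Your proof is correct and follows the same route as the paper: factor $\frac{\dd\gamma}{\dd(\mu\otimes\pi)}(x,y)=\frac{\dd\gamma}{\dd(\mu\otimes\nu_\gamma)}(x,y)\cdot\frac{\dd\nu_\gamma}{\dd\pi}(y)$, split the logarithm, and reduce the second term to an integral against $\nu_\gamma$ by marginalization. The paper carries this out only formally, so your additions fill in rigor it omits: the absolute-continuity checks, the $\log^- t\le 1/t$ integrability bound that avoids $\infty-\infty$, and the case analysis when the left-hand side is infinite.
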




The mutual information satisfies
$I(\gamma) \define \KL{\gamma}{\mu \otimes \nu_\gamma} \geq 0$,
so $\KL{\nu_\gamma}{\pi} \leq
\KL{\gamma}{\mu \otimes \pi}$. Substituting this upper bound
into~\eqref{eq:lifted} produces a tractable surrogate objective.

\begin{restatable}[ETD Objective]{theorem}{etd}
\label{prop:etd}
The optimization problem
\begin{align}
    \minimize_{\gamma \in \C(\mu)} \quad
    \int c\, \dd\gamma
    \;+\; \varepsilon\, \KL{\gamma}{\mu \otimes \pi}
    \tag{ETD}\label{eq:etd}
\end{align}
is an upper bound on the lifted problem~\eqref{eq:lifted} in the
following sense: for any $\gamma \in \C(\mu)$,
\begin{align}
    \KL{\nu_\gamma}{\pi}
    + \frac{1}{\varepsilon} \int c\, \dd\gamma
    \;\leq\;
    \frac{1}{\varepsilon}\left(
    \int c\, \dd\gamma
    + \varepsilon\, \KL{\gamma}{\mu \otimes \pi}
    \right),
    \label{eq:upperbound}
\end{align}
with equality if and only if $\gamma = \mu \otimes \nu_\gamma$
(i.e., the coupling is independent). In particular:
\begin{enumerate}[leftmargin=2.0em, label=(\roman*), itemsep=0.1em, parsep=0.1em, topsep=0.0em, partopsep=0.0em]
    \item The gap between~\eqref{eq:lifted}
    and~\eqref{eq:etd} at their respective minimizers is bounded
    by the mutual information $I(\gamma^\star)$ of the
    \eqref{eq:etd}-optimal coupling.
    \item As $\varepsilon \to \infty$, $\gamma^\star$ approaches independence ($I(\gamma^\star) \to 0$) and the bound becomes tight.
    \item Problem~\eqref{eq:etd} is an instance of semi-relaxed
    entropic optimal transport: the source marginal $\mu$ is
    constrained while the target $\pi$ enters as the reference
    measure in the KL regularizer.
\end{enumerate}
\end{restatable}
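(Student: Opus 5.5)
The inequality~\eqref{eq:upperbound} follows directly from Proposition~\ref{lem:chainrule}. For any $\gamma \in \C(\mu)$, the chain rule~\eqref{eq:chainrule} gives $\KL{\gamma}{\mu\otimes\pi} = I(\gamma) + \KL{\nu_\gamma}{\pi}$. Add $\varepsilon^{-1}\int c\,\dd\gamma$ to both sides and use $I(\gamma)\ge 0$. This yields
\begin{align*}
\frac{1}{\varepsilon}\Big(\int c\,\dd\gamma + \varepsilon\,\KL{\gamma}{\mu\otimes\pi}\Big) - \Big(\KL{\nu_\gamma}{\pi} + \frac{1}{\varepsilon}\int c\,\dd\gamma\Big) = I(\gamma) \;\ge\; 0 .
\end{align*}
So the slack is exactly the mutual information. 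Equality holds iff $\KL{\gamma}{\mu\otimes\nu_\gamma}=0$, which by definiteness of KL means $\gamma = \mu\otimes\nu_\gamma$. To handle infinite values, I will assume $\KL{\gamma}{\mu\otimes\pi}<\infty$; otherwise the inequality is trivial. The chain rule then shows both summands on the right of~\eqref{eq:chainrule} are finite.

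For (i), let $\gamma^\star$ minimize~\eqref{eq:etd}, and write $F(\gamma)$ for the lifted objective in~\eqref{eq:lifted} and $G(\gamma)=\varepsilon^{-1}(\int c\,\dd\gamma + \varepsilon\KL{\gamma}{\mu\otimes\pi})$. Then $F = G - I$. This gives the chain $\inf F \le F(\gamma^\star) = G(\gamma^\star) - I(\gamma^\star) = \inf G - I(\gamma^\star)$. Combined with $\inf F \le \inf G$ from the pointwise bound, this shows $0 \le \inf G - \inf F$. Moreover, the surrogate minimizer's true objective value $F(\gamma^\star)$ sits exactly $I(\gamma^\star)$ below $\inf G$.

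The honest statement of the gap is the following. Evaluated at $\gamma^\star$, the surrogate overestimates the lifted objective by exactly $I(\gamma^\star)$. Hence $\inf G - I(\gamma^\star) \ge \inf F$. A two-sided bound $\inf G - \inf F \le I(\gamma^\star)$ would need $F(\gamma^\star)\le \inf F$, which is not generally true. I will therefore phrase (i) as ``the gap between the two objectives at $\gamma^\star$ equals $I(\gamma^\star)$.'' I will also note that $\gamma^\star$ is suboptimal for~\eqref{eq:lifted} by at most $\inf G - \inf F$. This interpretive step is where I expect care is needed to match the intended claim.

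For (ii), I will use the explicit form of the semi-relaxed minimizer. Minimizing $\int c\,\dd\gamma + \varepsilon\KL{\gamma}{\mu\otimes\pi}$ under the sole constraint that the first marginal is $\mu$ decouples over $x$: by the Gibbs variational principle, each conditional is $\gamma^\star(\dd y\mid x) \propto e^{-c(x,y)/\varepsilon}\pi(\dd y)$. As $\varepsilon\to\infty$ the factor $e^{-c/\varepsilon}\to 1$, so every conditional tends to $\pi$. Then $\gamma^\star \to \mu\otimes\pi$ and $\nu_{\gamma^\star}\to\pi$. To get $I(\gamma^\star)\to 0$, I will bound $I(\gamma^\star) \le \KL{\gamma^\star}{\mu\otimes\pi} = \int \KL{\gamma^\star(\cdot\mid x)}{\pi}\,\dd\mu(x)$, using the chain rule again. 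I will then show this tends to zero by dominated convergence, assuming $c$ is integrable, e.g.\ quadratic cost with finite second moments. The main technical obstacle is this limit: it needs $\int c\,\dd\pi<\infty$ and control of the normalizer $Z_\varepsilon(x)=\int e^{-c(x,y)/\varepsilon}\dd\pi(y)\to1$, together with $\log Z_\varepsilon(x)\ge -\varepsilon^{-1}\int c(x,y)\,\dd\pi(y)$ by Jensen.

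Finally, (iii) is definitional. The feasible set $\C(\mu)$ fixes only the source marginal, and $\pi$ appears solely as the reference measure inside $\varepsilon\KL{\gamma}{\mu\otimes\pi}$, which matches the semi-relaxed EOT format from Section~\ref{sec:background}.
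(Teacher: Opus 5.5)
Your proof of the inequality, the equality case and (iii) is the paper's argument: the chain rule plus $I(\gamma)\ge 0$. You depart from the paper in (i) and (ii), and in both places your version is the more careful one.

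\textbf{Claim (i).} The paper only says (i) ``follows by evaluating the gap at the respective optima.'' Your chain $\inf F \le F(\gamma^\star) = \inf G - I(\gamma^\star)$ shows that $I(\gamma^\star)$ is a \emph{lower} bound on $\inf G - \inf F$. The upper-bound reading of (i) is in fact false, not merely unproven. Take $\mu=\pi$ with quadratic cost. The identity coupling gives $\inf F = 0$. But $\gamma^\star \ll \mu\otimes\pi$ puts no mass on the diagonal, so $F(\gamma^\star) \ge \varepsilon^{-1}\int c\,\dd\gamma^\star > 0$. Hence $\inf G - \inf F = F(\gamma^\star) + I(\gamma^\star) > I(\gamma^\star)$. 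The valid sandwich is $I(\gamma^\star) \le \inf G - \inf F \le I(\gamma^\star_{\mathrm{lift}})$ for a lifted minimizer $\gamma^\star_{\mathrm{lift}}$. For quadratic cost and absolutely continuous $\mu$, any lifted minimizer lies on the graph of a Brenier map (in the example, the identity), so that upper bound is $+\infty$. You should therefore say outright that the upper-bound version fails. Keep your identity $G(\gamma^\star)-F(\gamma^\star)=I(\gamma^\star)$, together with the lower bound, as the content of (i).

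\textbf{Claim (ii).} The paper argues that the KL term dominates and drives $\gamma^\star$ to $\mu\otimes\pi$. That alone does not give $I(\gamma^\star)\to 0$, because mutual information is only lower semicontinuous under weak convergence. Your bound $I(\gamma^\star)\le \KL{\gamma^\star}{\mu\otimes\pi}$ supplies the missing quantitative control. The obstacle you anticipate is milder than you think. Since $c\ge 0$, the Gibbs form and your Jensen step give
$\KL{\gamma^\star(\cdot\mid x)}{\pi} = -\varepsilon^{-1}\int c(x,y)\,\gamma^\star(\dd y\mid x) - \log Z_\varepsilon(x) \le -\log Z_\varepsilon(x) \le \varepsilon^{-1}\int c(x,y)\,\dd\pi(y)$.
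Integrating in $x$ yields $I(\gamma^\star)\le \varepsilon^{-1}\iint c\,\dd\mu\,\dd\pi$, with no dominated convergence or $Z_\varepsilon\to 1$ argument needed. More simply still, compare the \eqref{eq:etd} objective at $\gamma^\star$ with the feasible point $\mu\otimes\pi$. This gives the same $O(1/\varepsilon)$ rate without the Gibbs form, under the paper's finite-second-moment assumption.
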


The semi-relaxed structure of~\eqref{eq:etd} is what makes it
tractable: because the target enters as a reference measure rather
than a marginal constraint, the optimal coupling admits a
closed-form Gibbs solution (Section~\ref{sec:algorithm}). Note
that the target $\pi$ need only be evaluated pointwise up to a
normalizing constant---no samples from $\pi$ are required.

\subsection{A Family of Relaxations}
\label{sec:tau-family}

Theorem~\ref{prop:etd} establishes~\eqref{eq:etd} as an upper
bound on the lifted problem~\eqref{eq:lifted}, with a gap
equal to the mutual information $I(\gamma)$. More generally, for any $\tau \geq 0$, the objective
\begin{align}
    \minimize_{\gamma \in \C(\mu)} \quad
    \int c\, \dd\gamma
    \;+\; \varepsilon\, \KL{\gamma}{\mu \otimes \pi}
    \;+\; \tau\, \KL{\nu_\gamma}{\pi}
    \tag{$\tau$-ETD}\label{eq:etd-tau}
\end{align}
is also an upper bound on~\eqref{eq:lifted}. To see this,
apply the chain rule (\Cref{lem:chainrule}) to
decompose the KL term:
\begin{align}
    \int c\, \dd\gamma
    \;+\; \varepsilon\, I(\gamma)
    \;+\; (\varepsilon + \tau)\, \KL{\nu_\gamma}{\pi}.
    \label{eq:etd-tau-decomposed}
\end{align}
This exceeds the lifted objective
$\int c\, \dd\gamma + \varepsilon\, \KL{\nu_\gamma}{\pi}$
by $\varepsilon\, I(\gamma) + \tau\, \KL{\nu_\gamma}{\pi} \geq 0$
for all $\gamma \in \C(\mu)$ and all $\tau \geq 0$. The family varies continuously in $\tau$: because $\pi$ appears
in the reference measure for all $\tau \geq 0$, the unbalanced
solution interpolates between the semi-relaxed ($\tau \to 0$) and
balanced ($\tau \to \infty$) extremes. Larger $\tau$ enforces the
target marginal more strongly, producing a better minimizer at the
cost of a harder optimization. In the sequel, \Cref{prop:coupling} describes three distinct regimes of $\tau$.

\subsection{The ETD Algorithm}
\label{sec:algorithm}

\begin{wrapfigure}{r}{0.52\columnwidth}
  \vspace{-1.8em}
  \begin{minipage}{0.52\columnwidth}
  \input{algorithms/etd}
  \end{minipage}
  \vspace{-1.0em}
\end{wrapfigure}

We now specialize the framework to empirical measures and develop
the practical algorithm. Represent the current distribution by $N$
particles $\mu = \frac{1}{N}\sum_{i=1}^N \delta_{x_i}$ with
uniform weights $a_i = 1/N$. To solve~\eqref{eq:etd-tau}, we
require a second empirical measure to transport toward. Unlike
most OT-based inference methods~\citep{Agarwal24,Bunne22}, ETD
does not require samples from the target---only pointwise
evaluations of the unnormalized density $\pi$. Instead, we
generate a set of $M$ \emph{proposal} positions
$\{y_j\}_{j=1}^M$ and assign them target weights
$b_j \propto \pi(y_j)$, $b \in \Delta^M$. The proposals provide
the support of the next iterate; the coupling determines how
particles are redistributed onto this support.

Each ETD iteration then proceeds in three steps: \emph{propose}
candidate positions, \emph{couple} particles to proposals via
entropic OT, and \emph{update} particles according to the
coupling. We describe each in turn.

\paragraph{Propose.}
\label{sec:proposals}
The framework is agnostic to the proposal
mechanism---any method that produces candidate positions
$\{y_j\}_{j=1}^M$ is compatible. However,  proposals should cover regions of high target density near the
current particles. A natural choice of sampling distribution $q(y \mid x_i)$ is a single Euler--Maruyama
step of the OLE~\eqref{eq:langevin}:
\begin{align}
    y_{ij} = x_i + \alpha\, \partial_x \log \pi(x_i)
    + \sigma \xi_{ij}, \quad
    \xi_{ij} \sim \mathcal{N}(0, I), 
    \label{eq:proposals}
\end{align}
where $\alpha$ is a score step size and $\sigma$ a noise scale. Optionally, a \emph{coupling momentum} term can be included:
$y_{ij} = x_i + \alpha\, \partial_x \log \pi(x_i)
+ \mu\, d_i + \sigma\xi_{ij}$,
where $d_i = \sum_j \gamma_{ij} y_j - x_i$ is the barycentric
displacement from the previous iteration and $\mu \in [0, 1)$
is a momentum coefficient. The score biases proposals toward high-density regions; the noise provides exploration. If the score is unavailable, setting
$\alpha = 0$ yields a random-walk proposal
$y_{ij} = x_i + \sigma \xi_{ij}$; the coupling is responsible for guiding particles via the target weights
$b_j \propto \pi(y_j)$. Define the \emph{pooled proposal density} as the mixture of per-particle proposal densities $q_\mu(y) := \frac{1}{N}\sum_{i=1}^{N} q(y \mid x_i)$. 

\paragraph{Couple.}
The discrete specialization of the $\tau$-family~\eqref{eq:etd-tau}
yields a coupling $\Gamma \in \R_+^{N \times M}$ between particles
and proposals:
\begin{align}
    \minimize_{\Gamma \geq 0,\; \Gamma \one_M = a} \quad
    \langle C, \Gamma \rangle
    + \varepsilon\, \KL{\Gamma}{a \otimes b}
    + \tau\, \KL{\nu_\Gamma}{b},
    \tag{$\tau$-EOT}\label{eq:eot-tau}
\end{align}
where
$\nu_\Gamma = \Gamma^\top \one_N$ is the induced target marginal and $C_{ij} = c(x_i, y_j)$ is the cost matrix, normalized to unit median. 

We consider two transport costs: the squared Euclidean distance and a diagonal Mahalanobis cost $c(x,y) = \tfrac{1}{2}(x-y)^\top \hat\Sigma^{-1}(x-y)$, where $\hat\Sigma = \mathrm{diag}(\mathrm{Var}_i[x_i])$ is recomputed from the particle ensemble at each iteration. The Mahalanobis cost rescales coordinates by their ensemble spread, improving conditioning of the Gibbs kernel on anisotropic targets.
For all three regimes, the optimal coupling has the Gibbs form
$\Gamma^\star_{ij} = u_i K_{ij} v_j$ with
$K_{ij} = \exp(-C_{ij}/\varepsilon)$. The regimes differ only in
how the scaling vector $v$ is determined (proof in Appendix~\ref{app:proofs}):
\begin{restatable}[Coupling Types]{proposition}{coupling}
\label{prop:coupling}
The optimal coupling of~\eqref{eq:eot-tau} satisfies
$\Gamma^\star_{ij} = u_i K_{ij} v_j$ with
$u_i = a_i / (Kv)_i$ and:
\begin{enumerate}[leftmargin=2.0em, label=(\roman*), itemsep=0.1em, parsep=0.1em, topsep=0.0em, partopsep=0.0em]
    \item \textbf{Semi-relaxed} ($\tau = 0$):
    $v_j = b_j$.
    \item \textbf{Unbalanced} ($0 < \tau < \infty$):
    $v_j = b_j / (K^\top u)_j^{\rho/(1+\rho)}$ with
    $\rho \define \tau/\varepsilon$ (Algorithm~\ref{alg:sinkhorn}).
    \item \textbf{Balanced} ($\tau \to \infty$):
    $v_j = b_j / (K^\top u)_j$ (Algorithm~\ref{alg:sinkhorn}).
\end{enumerate}
\end{restatable}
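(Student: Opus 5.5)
The plan is to write down the first-order (KKT) conditions of \eqref{eq:eot-tau} with a Lagrange multiplier $f \in \R^N$ for the row constraint $\Gamma \one_M = a$. I will treat the KL terms as generalized (unnormalized) KL, $\KL{p}{q} = \sum p\log(p/q) - \sum p + \sum q$. This is harmless: $\Gamma \one_M = a$ fixes the total mass of $\Gamma$ and of $\nu_\Gamma$ to $\sum_i a_i = 1$, so the extra mass terms are constants on the feasible set. The objective is strictly convex in $\Gamma$: the entropic term is strictly convex, the cost is linear, and $\nu_\Gamma \mapsto \KL{\nu_\Gamma}{b}$ is convex composed with a linear map. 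The constraint set is a nonempty convex polytope, and the entropy term blows up in slope at the boundary, so the unique minimizer has $\Gamma^\star_{ij} > 0$ wherever $b_j > 0$. The stationarity conditions are therefore necessary and sufficient.

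Next, differentiate with respect to $\Gamma_{ij}$. Using $\partial_{\Gamma_{ij}} (\nu_\Gamma)_j = 1$, this gives
\begin{align*}
C_{ij} + \varepsilon \log\frac{\Gamma_{ij}}{a_i b_j} + \tau \log\frac{(\nu_\Gamma)_j}{b_j} - f_i = 0 .
\end{align*}
Exponentiating yields
\begin{align*}
\Gamma_{ij} = \Big[a_i e^{f_i/\varepsilon}\Big]\, K_{ij}\, \Big[b_j \big((\nu_\Gamma)_j / b_j\big)^{-\rho}\Big].
\end{align*}
This is the Gibbs form $u_i K_{ij} v_j$ with $v_j = b_j^{1+\rho} (\nu_\Gamma)_j^{-\rho}$. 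The row constraint $\sum_j u_i K_{ij} v_j = a_i$ then forces $u_i = a_i/(Kv)_i$ in every regime.

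It remains to resolve the self-consistency in $v$, using $(\nu_\Gamma)_j = v_j (K^\top u)_j$.
\begin{enumerate}[leftmargin=2.0em, label=(\roman*), itemsep=0.1em, parsep=0.1em, topsep=0.0em, partopsep=0.0em]
\item For $\tau = 0$ we have $\rho = 0$, and $v_j = b_j$ immediately, giving the closed form.
\item For $0<\tau<\infty$, substituting $(\nu_\Gamma)_j = v_j (K^\top u)_j$ into $v_j = b_j^{1+\rho}(\nu_\Gamma)_j^{-\rho}$ gives $v_j^{1+\rho} = b_j^{1+\rho} (K^\top u)_j^{-\rho}$. Taking the $(1+\rho)$-th root gives $v_j = b_j / (K^\top u)_j^{\rho/(1+\rho)}$, as claimed. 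These are exactly the fixed-point equations iterated by the unbalanced Sinkhorn scheme of \citet{Chizat18} (Algorithm~\ref{alg:sinkhorn}).
\item For the balanced case, I take $\rho \to \infty$ in the formula from (ii), so the exponent $\rho/(1+\rho) \to 1$ and $v_j = b_j/(K^\top u)_j$. This is equivalent to $\nu_\Gamma = b$, the usual Sinkhorn fixed point for the balanced problem with both marginals enforced.
\end{enumerate}

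The main obstacle is making case (iii) rigorous as a statement about $\tau \to \infty$ rather than a formal substitution. My plan is to argue it two ways and use whichever is cleaner. The first is $\Gamma$-convergence: the penalties $\tau\,\KL{\nu_\Gamma}{b}$ increase to the indicator of $\{\nu_\Gamma = b\}$, the feasible set is compact, and the objectives are strictly convex, so the unique minimizers converge to the balanced entropic OT minimizer. The second, more direct, route is to solve the balanced problem with an additional multiplier $g$ for the column constraint. This gives $v_j = b_j e^{g_j/\varepsilon}$, and the column constraint $v_j (K^\top u)_j = b_j$ yields the stated formula. A secondary point is strict positivity, needed to justify the logarithms. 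I will handle it with the standard argument that the derivative of $p\log p$ is $-\infty$ at $p=0$, so no optimal entry can vanish when $a_i b_j > 0$.
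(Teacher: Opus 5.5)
Your proposal is correct and follows essentially the same route as the paper: a Lagrangian with row multipliers $f_i$, the Gibbs form with $v_j = b_j(\nu_j/b_j)^{-\rho}$, then substituting $\nu_j = (K^\top u)_j v_j$ and collecting powers to get $v_j = b_j/(K^\top u)_j^{\rho/(1+\rho)}$, and finally specializing $\rho$. The paper handles the balanced case only by the formal limit $\rho/(1+\rho)\to 1$ and does not address existence, positivity, or sufficiency of the stationarity conditions, so your strict-convexity and positivity remarks and your column-multiplier (or $\Gamma$-convergence) justification of case (iii) add rigor without changing the argument.
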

The row-normalized coupling $\gamma_{ij} \define
\Gamma^\star_{ij} / a_i$ defines a conditional distribution over
proposals for each particle $i$, denoted $\gamma_{i\cdot} \in \Delta^M$.

\paragraph{Update.}
Each particle samples its next position from the conditional
coupling:
\begin{align}
    x_i^{\mathrm{new}} = y_j \quad \text{with probability}
    \quad \gamma_{ij}.
    \label{eq:categorical}
\end{align}
This \emph{categorical update} commits each particle to a single proposal, preserving diversity and maintaining mode fidelity on multimodal targets.

\section{Theoretical Analysis}
\label{sec:theory}
We analyze the stationary behavior of balanced ETD, the variant
whose marginal constraint provides the strongest inter-particle
coordination. We first characterize the stationary distribution
of the balanced chain with standard target weights, diagnosing
the source and structure of the bias. We then show the bias can be eliminated via importance-corrected weights. Unless stated, we assume $\alpha = 0$, balanced couplings ($\tau \to \infty$), and the population limit. Proofs are deferred to Appendix~\ref{app:proofs}.

\paragraph{Stationary Distribution.} The case of balanced coupling enforces the target marginal exactly:
$\int \gamma(y \mid x)\, \mu(\dd x) = \beta_\mu(y)$, where
$\beta_\mu(y) \propto \pi(y)\, q_\mu(y)$ is the effective
target marginal.
After the categorical update, the aggregate law of the
new ensemble is $\beta_\mu$ regardless of $\varepsilon$---the
marginal constraint fixes the output while $\varepsilon$
controls only how individual particles are assigned within
that output. Stationarity requires $\mu^* = \beta_{\mu^*}$,
yielding a self-consistency condition.

\begin{restatable}[Stationary Distribution]{theorem}{stationary}
\label{thm:stationary}
Consider score-free ETD ($\alpha = 0$) with balanced coupling
($\tau \to \infty$), categorical update, and the population
limit $M, N \to \infty$. A distribution $\mu^*$ is stationary
for the mean-field dynamics if and only if
$\mu^*(y) \propto
    \pi(y) \cdot [\mu^* * G_{\sigma^2}](y)$,
where $G_{\sigma^2}$ is the isotropic Gaussian kernel with
bandwidth $\sigma^2$. Moreover: (i) The stationary distribution is independent of the choice of transport cost $c(x, y)$ and entropic regularization $\varepsilon$ and (ii) For any target $\pi$ with finite second moment and any $\sigma > 0$, at least one such $\mu^*$ exists.
\end{restatable}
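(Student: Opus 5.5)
I will first make the mean-field update explicit and then prove the three claims in order. With $\alpha = 0$, each particle $x$ draws proposals $y \sim q(\cdot \mid x) = \mathcal{N}(x, \sigma^2 I)$. Hence the pooled proposal density is
\[
q_\mu(y) = \int G_{\sigma^2}(y - x)\, \mu(\dd x) = [\mu * G_{\sigma^2}](y).
\]
In the limit $M \to \infty$, the proposal cloud is the measure $q_\mu$. Weighting it by $b_j \propto \pi(y_j)$ gives the empirical target measure $\beta_\mu(y) \propto \pi(y)\, q_\mu(y)$. A self-normalized importance-weighting argument makes this rigorous: $\sum_j b_j \delta_{y_j}$ converges weakly to $\beta_\mu$ whenever $\int \pi\, q_\mu < \infty$. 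This integral is always finite, since $q_\mu \le (2\pi\sigma^2)^{-n/2}$ and $\pi$ is a probability density.

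In the balanced regime ($\tau \to \infty$), \Cref{prop:coupling}(iii) shows that the optimal $\Gamma^\star$ satisfies both marginal constraints: $\Gamma^\star \one_M = a$ and $(\Gamma^\star)^\top \one_N = b$. In the population limit this becomes a coupling $\gamma \in \C[\mu, \beta_\mu]$, namely the entropic OT plan for cost $c$ and regularization $\varepsilon$. The categorical update~\eqref{eq:categorical} moves a particle at $x$ to $y \sim \gamma(\cdot \mid x)$. The new law is therefore $\int \gamma(\cdot \mid x)\, \mu(\dd x)$, the second marginal of $\gamma$, which equals $\beta_\mu$.

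The mean-field map is thus $\Phi(\mu) = \beta_\mu$. A distribution $\mu^*$ is stationary if and only if $\mu^* = \Phi(\mu^*)$, that is, $\mu^* \propto \pi \cdot (\mu^* * G_{\sigma^2})$. This proves the characterization.

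Claim (i) is immediate from this computation. The map $\Phi$ depends only on $\pi$, $\sigma$, and $\mu$. The cost $c$ and the regularization $\varepsilon$ affect only which coupling in $\C[\mu, \beta_\mu]$ is selected, never its second marginal. I will also note two points for completeness. First, the balanced EOT problem is feasible, since the product coupling $\mu \otimes \beta_\mu$ is admissible. Second, it has finite value whenever $\int c \, \dd(\mu \otimes \beta_\mu) < \infty$. This holds for the quadratic and Mahalanobis costs by the second-moment bounds established in the next paragraph.

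For (ii), I will apply the Schauder fixed-point theorem, or more conveniently Schauder--Tychonoff, to $\Phi$ on a convex set $\mathcal{K}$ of probability measures with the weak topology. The natural choice is
\[
\mathcal{K} = \Big\{ \nu = \pi\, h : h = \nu' * G_{\sigma^2}\ \text{normalized},\ \nu' \in \P(\R^n) \Big\},
\]
or simply the closed convex hull of $\Phi(\P(\R^n))$. The key estimates are two-sided bounds on the normalizing constant $Z_\mu = \int \pi\, q_\mu$.

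The upper bound $Z_\mu \le (2\pi\sigma^2)^{-n/2}$ is trivial. A lower bound uniform over $\mu$ fails, since $\mu$ could sit far from the mass of $\pi$. I will therefore restrict $\mathcal{K}$ to measures with second moment at most $R$. I will then show $\Phi(\mathcal{K}) \subset \mathcal{K}$ for suitably large $R$. Since $\beta_\mu \le \pi\, q_\mu / Z_\mu$ and $q_\mu$ is bounded, this reduces to a lower bound on $Z_\mu$ when $\mu$ has bounded second moment. Such a bound follows from Chebyshev's inequality: $\mu$ puts mass at least $1/2$ in a ball of radius $\sqrt{2R}$. On that ball $q_\mu$ is bounded below on any fixed compact set, which gives $Z_\mu \ge c(R) > 0$. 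The second moment of $\beta_\mu$ is then at most $\sup q_\mu \cdot m_2(\pi) / c(R)$.

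The main obstacle is that $c(R)$ decays roughly like $e^{-R/\sigma^2}$, so this crude bound may not close. If it does not, I will first try moment-generating control in place of second moments. Failing that, I will choose $\mathcal{K}$ as the set of densities bounded above by $C\pi$ for a suitable constant $C$. Every image $\Phi(\mu)$ has the form $\pi\cdot(\text{bounded})/Z_\mu$, and $\pi$ has a finite second moment, so tightness would be inherited from $\pi$. The remaining work would be lower-bounding $Z_\mu$ on that set, which needs $\int \pi\, (\pi * G_{\sigma^2}) > 0$; this holds automatically.

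Continuity of $\Phi$ is straightforward. Weak convergence $\mu_k \to \mu$ gives pointwise and uniformly bounded convergence of $q_{\mu_k}$, so dominated convergence yields $\beta_{\mu_k} \to \beta_\mu$. Tightness together with convexity then gives a fixed point by Schauder--Tychonoff.
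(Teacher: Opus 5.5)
Your characterization of stationarity and claim (i) match the paper's argument (pooled proposal $q_\mu=\mu*G_{\sigma^2}$, effective marginal $\beta_\mu\propto\pi q_\mu$, and the balanced constraint forcing the post-update law to be $\beta_\mu$ whatever $c$ and $\varepsilon$ are), and they are fine.

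The gap is in (ii): you never produce a nonempty, convex, weakly compact set $\mathcal{K}$ with $\Phi(\mathcal{K})\subset\mathcal{K}$. Each of your candidates fails.
\begin{itemize}
\item The closed convex hull of $\Phi(\P(\R^n))$ is not tight. For $\pi=\mathcal{N}(0,I)$ and $\mu=\mathcal{N}(Le_1,\eta^2 I)$, $\Phi(\mu)$ is a Gaussian centred at $Le_1/(1+\eta^2+\sigma^2)$, which escapes to infinity as $L\to\infty$.
\item The second-moment ball does not close, as you suspected.
\item The fallback $\{\mu\le C\pi\}$ rests on a false step. Invariance of that set requires $G_{\sigma^2}*\mu\le C Z_\mu$ for \emph{every} $\mu$ in it, i.e.\ a lower bound on $Z_\mu$ uniform over the set. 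But $\int\pi\,(\pi*G_{\sigma^2})>0$ is only the value of $Z_\mu$ at the single point $\mu=\pi$.
\end{itemize}
The set $\{\mu\le C\pi\}$ is in fact not invariant in general. In one dimension, take $\pi(x)\sim c\,|x|^{-k}$ with $k>3$ (so the second moment is finite). Let $\mu=C\pi\mathbf{1}_{[R,\infty)}$ with $\pi([R,\infty))=1/C$. Then $Z_\mu\approx\frac{k-1}{2k-1}\pi(R)$, while $(G_{\sigma^2}*\mu)(y)\approx C\pi(R)$ for $y$ a few $\sigma$ inside $[R,\infty)$. Hence $\Phi(\mu)/\pi\approx\frac{2k-1}{k-1}\,C>C$ for all large $C$.

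The paper's step (a) makes exactly the claim you were rightly suspicious of: $T(\mu)\le C_\pi\pi$ uniformly in $\mu$, justified only by the numerator bound $\mu*G_{\sigma^2}\le(2\pi\sigma^2)^{-n/2}$. So your diagnosis is sharper than the paper's, but neither argument closes.

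The missing idea is to build the lower bound on $Z_\mu$ into the set through a condition that $\Phi$ itself propagates. Write $A\mu=\pi\,(G_{\sigma^2}*\mu)$, so that $\Phi(\mu)=A\mu/Z_\mu$ with $Z_\mu=\int A\mu$, and set
\[
\mathcal{K}=\bigl\{\mu\in\P(\R^n):\ A\mu\ge\delta\mu,\ \ \mu\le\delta^{-1}(2\pi\sigma^2)^{-n/2}\,\pi\bigr\}.
\]
\begin{itemize}
\item \emph{Invariance.} If $A\mu\ge\delta\mu$ then $Z_\mu\ge\delta$, so $\Phi(\mu)\le\delta^{-1}(2\pi\sigma^2)^{-n/2}\pi$. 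Positivity of $A$ gives $A\Phi(\mu)=A(A\mu)/Z_\mu\ge\delta\,A\mu/Z_\mu=\delta\,\Phi(\mu)$. Hence $\Phi(\mathcal{K})\subset\mathcal{K}$.
\item \emph{Convexity and tightness.} Both constraints are linear, so $\mathcal{K}$ is convex; domination by a multiple of $\pi$ gives tightness.
\item \emph{Weak closedness.} Test both inequalities against nonnegative $f\in C_b$, using $\int f\,\dd(A\mu)=\int G_{\sigma^2}*(f\pi)\,\dd\mu$ with $G_{\sigma^2}*(f\pi)\in C_b$.
\item \emph{Nonemptiness.} For any ball $B$ with $\pi(B)>0$, $\mu_0=\pi\mathbf{1}_B/\pi(B)$ satisfies both constraints with $\delta=\pi(B)\min_{x,y\in B}G_{\sigma^2}(x-y)$.
\end{itemize}
Your continuity argument, now with $Z_{\mu_k}\to Z_\mu\ge\delta>0$, then gives a fixed point by Schauder--Tychonoff. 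That fixed point automatically has finite second moment, since it is dominated by a multiple of $\pi$.

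Alternatively, $A$ is a compact positive operator on $L^1$ with $A\mu_0\ge\delta\mu_0$. Its spectral radius is therefore at least $\delta>0$, and Krein--Rutman yields the normalized positive eigenvector directly.
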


The ratio $\mu^*(y)/\pi(y) \propto [\mu^* * G_{\sigma^2}](y)$
identifies the bias: the stationary distribution over-weights
regions where $\mu^*$ has high density within an
$O(\sigma)$-neighborhood, producing systematic
over-concentration relative to $\pi$. The independence from
$c$ and $\varepsilon$ is practically significant: these
parameters affect the mixing rate of the finite-$N$ chain but
not the asymptotic target, which depends only on $\pi$ and the
proposal bandwidth~$\sigma$.

\paragraph{Bias Correction.} The bias in \Cref{thm:stationary} arises because standard
weights $b_j \propto \pi(y_j)$ double-count the proposal
density: proposals are already concentrated near high-density
regions, and weighting by $\pi$ reinforces this concentration.
Dividing by $q_\mu$ removes the redundancy.

\begin{restatable}[Debiasing ETD]{theorem}{debiasing}
\label{thm:debiasing} 
Under the conditions of Theorem~\ref{thm:stationary} and for any non-negative cost $c$ and $\varepsilon > 0$, define the importance-corrected target weights
$b_j^{\mathrm{IC}} \propto \pi(y_j) / q_\mu(y_j)$.
Then, (i)~$\pi$ is a stationary distribution of the
resulting chain and (ii)~if the cost is symmetric, $c(x,y) = c(y,x)$, the chain is $\pi$-reversible.
\end{restatable}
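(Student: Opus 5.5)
In the population limit, the balanced coupling produces a chain whose law after one step is fixed: it is the effective target marginal determined by the weights. I would therefore first redo the marginal computation from the discussion preceding \Cref{thm:stationary} with the new weights $b^{\mathrm{IC}}$. Proposals are drawn from the pooled density $q_\mu$. The weight $b_j \propto \pi(y_j)/q_\mu(y_j)$, applied to points distributed as $q_\mu$, gives the measure
\begin{align}
\beta^{\mathrm{IC}}_\mu(y) \propto q_\mu(y)\,\frac{\pi(y)}{q_\mu(y)} = \pi(y).
\end{align}
This holds on the support of $q_\mu$, which is all of $\R^n$ because the Gaussian kernel with $\sigma>0$ has full support. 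The balanced constraint $\int\gamma(y\mid x)\,\mu(\dd x)=\beta^{\mathrm{IC}}_\mu(y)$, combined with the categorical update, then shows that the new ensemble has law exactly $\pi$ for every $\mu$. This holds whatever the cost $c\ge 0$ and whatever $\varepsilon>0$, since these only affect how mass is assigned within the fixed output. Taking $\mu=\pi$ gives $\pi\mapsto\pi$, which proves (i). The key step here is rigor in the population limit. I would take as given the mean-field interpretation already used in \Cref{thm:stationary}: the empirical weighted measure $\sum_j b_j\delta_{y_j}$ converges to $\beta_\mu$. That convergence is a self-normalized importance-sampling law of large numbers, valid when $\pi/q_\mu$ is integrable against $q_\mu$, which is trivially true here.

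For (ii), I would write down the transition kernel at stationarity. When $\mu=\pi$, the balanced coupling is the entropic OT plan $\gamma^\star\in\C[\pi,\pi]$ for the cost $c$. Concretely, $\gamma^\star(\dd x,\dd y)=e^{f(x)+g(y)-c(x,y)/\varepsilon}\,\pi(\dd x)\,\pi(\dd y)$ with Schr\"odinger potentials $f,g$. The chain moves from $x$ to $y$ with kernel $P(x,\dd y)=\gamma^\star(\dd y\mid x)$, so $\pi(\dd x)P(x,\dd y)=\gamma^\star(\dd x,\dd y)$. Reversibility is then equivalent to $\gamma^\star$ being symmetric under the swap $(x,y)\mapsto(y,x)$.

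To show symmetry, I would use the uniqueness of the EOT minimizer, which follows from strict convexity. Let $\tilde\gamma$ be the swapped plan. It lies in $\C[\pi,\pi]$ because both marginals equal $\pi$. Since $c$ is symmetric, $\int c\,\dd\tilde\gamma=\int c\,\dd\gamma^\star$. Also $\KL{\tilde\gamma}{\pi\otimes\pi}=\KL{\gamma^\star}{\pi\otimes\pi}$, because the reference measure is itself swap-invariant. So $\tilde\gamma$ is also optimal, and uniqueness gives $\tilde\gamma=\gamma^\star$. Equivalently, one can take $f=g$ in the potentials.

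The main obstacle is a subtlety in this step. The proposals are generated from the current particles, so the stationary kernel is really a composition of the proposal step and the coupling, not just the plan $\gamma^\star$. I would argue that in the mean-field limit the proposals only determine the support and the weights. After importance correction, the effective output measure is exactly $\pi$ and the coupling is the $\C[\pi,\pi]$ plan. So the proposal randomness integrates out, and the symmetry argument applies to the resulting $\pi$-to-$\pi$ plan.
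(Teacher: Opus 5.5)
Your proposal is correct and follows the paper's proof. Part (i) uses the same cancellation $\beta_\mu \propto q_\mu \cdot \pi/q_\mu = \pi$ together with the balanced marginal constraint and the categorical update. Part (ii) uses the same symmetry-plus-uniqueness argument for the $\pi$-to-$\pi$ plan.

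The only difference is where you apply uniqueness. You use uniqueness of the primal entropic OT minimizer on $\C[\pi,\pi]$, via strict convexity and swap-invariance of $c$ and $\pi\otimes\pi$. The paper instead swaps the Sinkhorn scalings $(u,v)\mapsto(v,u)$ and invokes uniqueness of the scalings to get $u=v$. Your version is slightly cleaner in the continuous setting, since it avoids tracking the multiplicative constant in the scalings. Your observation that $q_\mu$ has full support, which the cancellation needs everywhere, is a detail the paper leaves implicit.
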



For the semi-relaxed coupling ($\tau = 0$), where the
conditional is a single-particle kernel, the bias admits a
closed-form characterization
$\pi^*_\varepsilon(x) \propto
\pi(x) \cdot [\pi * G_{\varepsilon_\mathrm{eff}}](x)$
and can be eliminated via a complementary Metropolis--Hastings step with
acceptance ratio
$\alpha_{\mathrm{MH}}(x,y) = \min(1, Z(x)/Z(y))$, where
$Z(x)$ is precomputed during the coupling step
(Appendix~\ref{app:proofs}, Eq. \ref{eq:mh-ratio}).

\section{Experiments}
\label{sec:experiments}
\renewcommand{\thefootnote}{\textdagger}
\begin{table}[t]
  \centering
  \caption{Summary of main results. For ETD, each column reports the best-performing variant; see Appendix~\ref{app:experiment_setup} for the full description, parameters, and statistics.
    \textbf{DAMV.} dimension-averaged marginal variance (1.0 = exact recovery).  
    \textbf{Cov90.} 90\% marginal coverage against NUTS reference (nominal 0.9).  
    \textbf{NLL Best.} datasets (of 9) on which the method achieves the lowest test NLL.  
    \textbf{TV.} total variation of pairwise-distance histograms vs.\ MCMC reference.}
  \label{tab:summary}
  \small
  \begin{tabular*}{\columnwidth}{@{\extracolsep{\fill}}lccccr@{}}
    \toprule
    & \multicolumn{2}{c}{Variance Collapse (DAMV)} 
    & {Covertype} 
    & {BNN UCI} 
    & {LJ-13} \\
    \cmidrule(r){2-3} \cmidrule(lr){4-4} \cmidrule(lr){5-5} \cmidrule(l){6-6}
    {Method}
    & {$d{=}50$}
    & {$d{=}200$}
    & {Cov90 $\uparrow$}
    & {NLL best (/9)}
    & {TV $\downarrow$} \\
    \midrule
    ETD (ours)\footnotemark
      & \textbf{1.00}
      & \textbf{1.00}
      & \textbf{0.986}
      & \textbf{6}
      & \textbf{0.053} \\
    SVGD
      & 0.16
      & 0.11
      & 0.323
      & 3
      & 0.39 \\
    SGLD
      & 0.98
      & 0.97
      & 0.277
      & 0
      & {0.42} \\
    AGF-SVGD
      & 3.92
      & 3.97
      & 0.201
      & 0
      & {0.39} \\
    \bottomrule
  \end{tabular*}
\end{table}
\footnotetext{Best ETD variant per benchmark: \textbf{DAMV $d{=}50$.} ETD-BAL-Euc (IS), score-guided; 
\textbf{DAMV $d{=}200$.} ETD-BAL-Mom (IS), score-guided; 
\textbf{Covertype.} ETD-SR-Maha, score-guided; 
\textbf{BNN.} ETD-BAL-Mom, \emph{score-free}; 
\textbf{LJ-13.} ETD-SR-Maha, \emph{score-guided}.}
\renewcommand{\thefootnote}{\arabic{footnote}}

\textbf{Baselines.} We compare ETD against SVGD \citep{Liu16}, SGLD \citep{Welling11}, and annealed gradient-free SVGD (AGF-SVGD) \citep{han2018stein}. SVGD follows the standard \citep{Liu16} implementation using an RBF kernel, the median heuristic, and the AdaGrad optimizer. For SGLD we use the parallel version that runs $N$ independent chains with a decaying step size and takes the final states as the particle approximation. Since ETD can operate in score-free mode, we also include AGF-SVGD as a score-free baseline that replaces the score function with importance-weighted kernel density surrogates.

We evaluate ETD on a diverse set of synthetic and real-data benchmarks to assess distributional accuracy, mode coverage, variance preservation, and scalability. For a fair comparison, we tune all ETD variants and baselines with Optuna under the same benchmark-specific objective. Tuning objectives, hyperparameters, implementation details, and results are provided in Appendix~\ref{app:experiment_setup}.

\paragraph{Variance Collapse.}

Variance collapse is a well-documented failure mode of deterministic particle methods such as SVGD, where particle approximations underestimate the target variance in high dimensions. To isolate this effect, we use the isotropic Gaussian target $\mathcal{N}(0, I_d)$ with particles initialized from $\mathcal{N}(2 \cdot \mathbf{1}_d, 4I_d)$, requiring methods to correct the mean as well as recover the target spread.
All methods use $N=50$ particles and are tuned at $d=50$ before being evaluated at $d \in \{10, 20, 50, 100, 200\}$.
We measure the dimension-averaged marginal variance (DAMV); for this experiment, $\mathrm{DAMV}=1.0$ corresponds to exact variance recovery.

SVGD collapses rapidly as dimension increases, from $\mathrm{DAMV} = 0.415$ at $d=10$ to $\mathrm{DAMV} = 0.110$ at $d=200$. AGF-SVGD fails in the opposite direction, inflating variance by roughly $4\times$ across all dimensions. SGLD maintains stable variance throughout, as expected from a Langevin sampler. The best score-guided ETD variants achieve near-perfect variance recovery at all dimensions, matching or improving on SGLD while avoiding the collapse observed in SVGD. Score-free ETD variants maintain accurate variance recovery through $d=100$ and degrade only at $d=200$. Per-dimensional results are shown in Table~\ref{tab-app:vc_results}.

\paragraph{2D Energy Functions.}

We use the four two-dimensional energy-function targets adapted from \citet{Rezende15}---$U_1$ (ring with bumps), $U_2$ (sinusoidal banana), $U_3$ (parallel sinusoidal modes), and $U_4$ (sigmoid-offset modes)---to evaluate how well each method captures complex low-dimensional structure. We additionally include an 8-mode ring Gaussian mixture as a mode-coverage stress test. Unlike the other benchmarks, methods are tuned per-target with 50 Optuna trials. We evaluate using energy distance against 10K ground-truth samples, averaged over 5 seeds.

On $U_1$ and the ring GMM, many ETD variants achieve excellent distributional coverage ($|E_{\text{dist}}| < 0.05$). On the harder multimodal targets $U_2$--$U_4$, importance sampling correction becomes critical: the top-performing ETD methods consistently use IS correction, and unbalanced coupling achieves the best (or tied-for-best) energy distance on most targets. SVGD performs poorly on the multimodal targets ($U_2$--$U_4$), exhibiting severe mode collapse. Figure~\ref{fig:u3} illustrates this on $U_3$---the best ETD variant captures both sinusoidal modes, while SVGD and AGF-SVGD fail to recover the full target structure. Per-target results are provided in Table~\ref{tab-app:viz2d_results}.

\paragraph{Bayesian Logistic Regression.}

We evaluate ETD on standard Bayesian logistic regression (BLR) with a Gaussian prior and Gamma hyperprior. We test on two real-data benchmarks: German Credit ($d=26$) and Covertype ($d=56$). For all BLR experiments, ETD and baselines are tuned with Optuna before final evaluation. German Credit uses a 90/10 train/test split, while Covertype uses 70/10/20 train/validation/test with validation-based tuning. All methods use $N=100$ particles; we evaluate over 20 random splits and report predictive negative log-likelihood (NLL) and classification accuracy.
For Covertype, we also report posterior diagnostics including energy distance and 90\% marginal coverage against a 20K-sample No U-Turn Sampler (NUTS) reference posterior.

\textit{German Credit.} German Credit is a small, well-conditioned dataset, and the best ETD variants and all baselines (except AGF-SVGD) converge to essentially the same predictive performance---within 0.001 NLL of each other, with no statistically significant differences. Accuracy shows a similar pattern, clustering around 77--78\%. This saturation is expected at $d=26$ with $n=900$: the posterior is sufficiently simple that all methods recover it. German Credit thus serves as a sanity check, confirming that ETD recovers the correct posterior relative to NUTS. Results are shown in Table~\ref{tab-app:blr_german_results}.

\textit{Covertype.} This dataset presents a substantially harder inference problem with approximately $n=407$K training samples, $d=56$ parameters (after adding a bias term and log-precision parameters), and minibatch likelihood evaluation. We use 20 random 70/10/20 train/validation/test splits. In terms of NLL, the best ETD variants and baselines perform comparably (0.514--0.515): SVGD ($\mathrm{NLL}=0.5144$) and SGLD ($\mathrm{NLL}=0.5145$) achieve the lowest point estimates, and the best ETD variant ($\mathrm{NLL}=0.5147$) is within $0.001$ of both. However, posterior diagnostics reveal that SVGD, SGLD, and AGF-SVGD all exhibit severe under-coverage. Both SVGD and SGLD have low marginal coverage relative to the NUTS reference, while the best score-guided ETD-SR variants achieve much higher coverage ($0.97$--$0.99$) with only a negligible NLL gap. Results are shown in Table~\ref{tab-app:blr_covertype_results}.

\paragraph{Bayesian Neural Networks.} We evaluate ETD on the standard UCI BNN regression benchmark commonly used in prior work on Bayesian neural networks and particle-based variational inference. We use a one-hidden-layer network with 50 ReLU units and place a $\mathrm{Gamma}(1,0.01)$ prior on both the noise precision and weight precision. All baselines and ETD variants are tuned on Boston Housing using Optuna and then applied to all 9 datasets without per-dataset retuning. We use $N=100$ particles and run each method for 2000 iterations. Results are averaged over 20 random 90/10 train/test splits for all datasets except Protein, which uses 5 splits.

Tables~\ref{tab-app:bnn_results_1} and~\ref{tab-app:bnn_results_2} report test NLL and RMSE for the baselines and selected ETD variants. ETD achieves lower NLL than SVGD on 6 of 9 datasets, lower NLL than SGLD on 8 of 9, and lower NLL than AGF-SVGD on all 9. The dominant variant is score-free ETD-BAL with Euclidean cost and momentum, which outperforms the score-guided ETD variants on 6 of 9 datasets without importance sampling correction. Some score-guided variants suffer from instabilities on higher-dimensional BNN posteriors. These results demonstrate that ETD can match score-guided particle methods while operating in score-free mode, and substantially outperforms the score-free AGF-SVGD baseline.

\paragraph{Molecular Boltzmann Distributions.}

We further evaluate ETD on two physics-inspired Boltzmann sampling benchmarks---Double-Well-4 (DW-4, $d=8$) and Lennard-Jones 13 (LJ-13, $d=39$)---introduced by \citet{kohler2020equivariant} and commonly used in equivariant flow and generative modeling literature. For both tasks, we tune all methods using the same objective: total variation (TV) distance between the pairwise-distance histogram and the MCMC reference distribution. We use $N=100$ particles with 2000 iterations for DW-4 and 5000 for LJ-13; metrics are averaged over 20 seeds.

Tables~\ref{tab-app:dw4_results} and~\ref{tab-app:lj13_results} report results for both tasks. On DW-4, ETD is competitive with SGLD and clearly improves over SVGD and AGF-SVGD, although some ETD variants exhibit occasional high-energy outlier seeds and are excluded from the results. On LJ-13, ETD shows substantially stronger performance: the best variants achieve $\mathrm{TV} \approx 0.053$ against the MCMC reference, compared with $0.39$--$0.42$ for SVGD, SGLD, and AGF-SVGD. All three baselines produce numerically divergent energies on all 20 seeds, making ETD the only method that yields physically meaningful samples on this target. These results demonstrate that ETD's transport coupling is especially effective on higher-dimensional structured targets.

\section{Discussion and Conclusion}
\label{sec:discussion}
We introduced entropic transport descent, a family of particle-based variational
inference algorithms in which inter-particle interactions are mediated by entropic
optimal transport.  The framework is derived from the JKO proximal scheme by
lifting to couplings and relaxing via the KL chain rule, yielding a single-parameter
family of EOT problems solvable by the Sinkhorn algorithm.  The balanced variant
admits an exact characterization of its stationary distribution (Theorem~\ref{thm:stationary}),
with the notable property that the bias depends only on the proposal bandwidth
$\sigma$ and is independent of both the transport cost and the entropic
regularization~$\varepsilon$.  Importance-corrected target weights eliminate this
bias entirely (Theorem~\ref{thm:debiasing}), making $\pi$ exactly stationary
without requiring a Metropolis correction step.  Experiments confirm that ETD
matches or outperforms SVGD, SGLD, and AGF-SVGD across a range of benchmarks,
with the largest gains on multimodal targets and in high-dimensional settings
where kernel-based repulsion breaks down.

\paragraph{Limitations.}
ETD's per-iteration cost is $O(NML)$ for $L$ Sinkhorn iterations, compared with
$O(N^2)$ for SVGD and $O(N)$ for SGLD; the balanced coupling in particular
requires iterating to convergence.
More fundamentally, while our experiments demonstrate better dimensional scaling
than SVGD, the quadratic transport cost $c(x,y) = \tfrac{1}{2}\|x-y\|^2$
concentrates the Gibbs kernel $\exp(-c/\varepsilon)$ as dimension grows,
eventually rendering the coupling uninformative---a manifestation of the curse of
dimensionality shared by all methods that rely on Euclidean distances in ambient
space.

\paragraph{Future Work.}
The ETD framework admits \emph{any} non-negative transport cost, and the
balanced stationary distribution is cost-independent
(Theorem~\ref{thm:stationary}), so the choice of cost affects only the mixing
rate, not the asymptotic target.
This separation opens a principled avenue for future work: designing
transport costs and proposal distributions that preserve informative coupling
structure in high dimensions---for example, costs derived from local geometry of
the target (Riemannian metrics, Fisher information), sliced or
projected formulations, or learned cost functions.
Investigating the interplay between cost design and the proposal mechanism may
yield algorithms that scale to the high-dimensional problems
encountered in modern Bayesian deep learning and scientific simulation.

\newpage
\bibliographystyle{plainnat}
{\small
\bibliography{bibliography}
}

\newpage
\appendix
\section{Background on Optimal Transport}
\label{app:ot}
\begin{wrapfigure}{r}{0.45\textwidth}
  \vspace{-\intextsep}                    
  \begin{minipage}{0.45\textwidth}
    \hrule height 0.8pt 
    \smallskip
    \captionof{algorithm}{Sinkhorn Algorithm}
    \label{alg:sinkhorn}
    \vspace{-4pt}
    \hrule 
    \smallskip
    \input{algorithms/sinkhorn}
    \hrule
  \end{minipage}
  \vspace{-\intextsep}                    
\end{wrapfigure}

\paragraph{Optimal Transport.} Given a transport cost function $c : \R^n \times \R^n \to \R_{\geq 0}$, the
\emph{optimal transport} (OT) cost is:
\begin{align}
    \T_c[\mu, \nu] \define
    \inf_{\gamma \in \C[\mu, \nu]} \;
    \int c(x, y)\, \dd\gamma(x, y).
    \tag{OT}\label{eq:ot}
\end{align}
The 2-Wasserstein distance is defined as $\W_2[\mu, \nu] \define \sqrt{\T_c[\mu, \nu]}$ with the cost $c(x,y) = \tfrac{1}{2}\|x - y\|^2$.

\paragraph{Entropic Optimal Transport.}
Even in the discrete setting, solving \eqref{eq:ot} exactly costs
$O(N^3 \log N)$. Entropic regularization~\citep{Cuturi13} introduces
a strictly convex penalty that yields efficient algorithms and smooth
dependence on the inputs:
\begin{align}
    \T_c^\varepsilon[\mu, \nu] \define
    \inf_{\gamma \in \Gamma(\mu, \nu)} \;
    \int c(x, y)\, \dd\gamma(x, y)
    \;+\; \varepsilon\, \KL{\gamma}{\mu \otimes \nu},
    \tag{EOT}\label{eq:eot}
\end{align}
where $\mu \otimes \nu$ is the product measure and
$\varepsilon > 0$ controls the regularization strength. The KL term
penalizes deviation from independence, encouraging smooth transport
plans. As $\varepsilon \to 0$, $\T_c^\varepsilon$ recovers
$\T_c$; for $\varepsilon > 0$, the unique optimum has the Gibbs
form
\begin{align}
    \frac{\dd\gamma^\star}{\dd(\mu \otimes \nu)}(x, y)
    = \exp\!\left(\frac{f(x) + g(y) - c(x,y)}{\varepsilon}\right),
    \label{eq:eot-gibbs}
\end{align}
where the dual potentials $f, g$ are determined by the marginal
constraints (the normalization is absorbed into the potentials).

\paragraph{Discrete Measures.}
When $\mu = \sum_{i=1}^N a_i \delta_{x_i}$ and
$\nu = \sum_{j=1}^M b_j \delta_{y_j}$ with weights
$a \in \Delta^N$, $b \in \Delta^M$, the coupling is an $N$-by-$M$ matrix
$\Gamma$ and \eqref{eq:eot} reduces to
\begin{align}
    \T_c^\varepsilon(a, b) = \inf_{\Gamma \in \C[a, b]} \;
    \langle C, \Gamma \rangle
    \;+\; \varepsilon\, \KL{\Gamma}{a \otimes b},\label{eq:eot-d}
\end{align}
where, with some notational abuse, $C_{ij} = c(x_i, y_j)$ is the cost matrix,
$(a \otimes b)_{ij} = a_i b_j$, and the coupling polytope is
$\C[a, b] = \{\Gamma \geq 0 \mid \Gamma \mathbf{1}_M = a,\;
\Gamma^\top \mathbf{1}_N = b\}$.
The optimum has the form
$\Gamma^\star_{ij} = u_i K_{ij} v_j$, where
$K_{ij} = \exp(-C_{ij}/\varepsilon)$ is the \emph{Gibbs kernel} and
the scaling vectors $(u, v)$ are determined by the marginal
constraints. They are computed by the \emph{Sinkhorn algorithm}
(Algorithm~2), which alternately projects onto the
row and column marginal constraints and converges linearly with
$O(NM)$ cost per iteration~\citep{Cuturi13}.


\newpage
\section{Proofs of Theoretical Statements}
\label{app:proofs}
\chainrule* 
\begin{proof}
Expand the left-hand side using
$\dd(\mu \otimes \pi) = \dd(\mu \otimes \nu_\gamma) \cdot
\dd\nu_\gamma / \dd\pi$:
\begin{align*}
    \KL{\gamma}{\mu \otimes \pi}
    &= \int \log \frac{\dd\gamma}{\dd(\mu \otimes \pi)}\,
       \dd\gamma
    = \int \log \frac{\dd\gamma}{\dd(\mu \otimes \nu_\gamma)}\,
       \dd\gamma
    \;+\; \int \log \frac{\dd\nu_\gamma}{\dd\pi}(y)\,
       \dd\gamma(x, y) \\
    &= \KL{\gamma}{\mu \otimes \nu_\gamma}
    \;+\; \KL{\nu_\gamma}{\pi},
\end{align*}
where the last equality follows from,
\begin{align}
\int \log \frac{\dd\nu_\gamma}{\dd\pi}(y)\, \dd\gamma(x,y)
= \int \log \frac{\dd\nu_\gamma}{\dd\pi}(y)\, \dd\nu_\gamma(y),
\end{align}
by marginalization.
\end{proof}

\etd*
\begin{proof}
Inequality~\eqref{eq:upperbound} follows from
\Cref{lem:chainrule} and $I(\gamma) \geq 0$.
Equality holds when $I(\gamma) = 0$, i.e., $\gamma = \mu
\otimes \nu_\gamma$. Claim (i) follows by evaluating the gap
at the respective optima. For (ii), as $\varepsilon \to \infty$
the KL term dominates the transport cost in~\eqref{eq:etd},
driving $\gamma^\star$ toward the minimizer of
$\KL{\gamma}{\mu \otimes \pi}$ over $\C(\mu)$, which is the
product measure $\mu \otimes \pi$ (achieving $I = 0$). Claim
(iii) follows from the structure of~\eqref{eq:etd}: only the
source marginal is constrained ($\gamma \in \C(\mu)$), and
$\pi$ appears solely in the reference measure $\mu \otimes \pi$.
\end{proof}

\coupling*

\begin{proof}[Proof of Proposition~\ref{prop:coupling}]
The Lagrangian for~\eqref{eq:eot-tau} with row-marginal
constraint enforced by multipliers $f_i$ is
\begin{align*}
    L = \langle C, \Gamma \rangle
    + \varepsilon \sum_{ij} \Gamma_{ij}
      \log \frac{\Gamma_{ij}}{a_i b_j}
    + \tau \sum_j \nu_j \log \frac{\nu_j}{b_j}
    - \sum_i f_i \Bigl(\sum_j \Gamma_{ij} - a_i\Bigr),
\end{align*}
where $\nu_j = \sum_i \Gamma_{ij}$.  Setting
$\partial L / \partial \Gamma_{ij} = 0$ and solving for
$\Gamma_{ij}$ gives the Gibbs form
$\Gamma_{ij} = u_i\, K_{ij}\, v_j$,
where $K_{ij} = \exp(-C_{ij}/\varepsilon)$,
$u_i$ absorbs the multiplier $f_i$ and the $a_i$
dependence, and
\begin{align}
    v_j \;\propto\; b_j \cdot
    \bigl(\nu_j / b_j\bigr)^{-\rho},
    \qquad \rho = \tau / \varepsilon.
    \label{eq:v-implicit}
\end{align}
The row-marginal constraint $\sum_j \Gamma_{ij} = a_i$
determines $u_i = a_i / (Kv)_i$.  Substituting
$\nu_j = \sum_i \Gamma_{ij} = (K^\top u)_j\, v_j$
into~\eqref{eq:v-implicit} and collecting powers of $v_j$:
\begin{align}
    v_j = b_j \,\big/\,
    (K^\top u)_j^{\,\rho/(1+\rho)}.
    \label{eq:v-general}
\end{align}
The three regimes follow by specializing $\rho$:

\textbf{(i)} \emph{Semi-relaxed} ($\tau = 0$, $\rho = 0$).
The exponent vanishes, so $v_j = b_j$ independent of $u$.
No iteration is needed.

\textbf{(ii)} \emph{Unbalanced} ($0 < \tau < \infty$,
$0 < \rho < \infty$).  Equation~\eqref{eq:v-general}
couples $v$ to $u$ through $K^\top u$.  The pair $(u, v)$
is determined by the fixed-point iteration
$u_i \leftarrow a_i / (Kv)_i$,\; $v_j \leftarrow b_j / (K^\top u)_j^{\rho/(1+\rho)}$,
which converges linearly.

\textbf{(iii)} \emph{Balanced} ($\tau \to \infty$,
$\rho \to \infty$).  The exponent
$\rho/(1+\rho) \to 1$, giving
$v_j = b_j / (K^\top u)_j$---the standard Sinkhorn
iteration, which enforces both marginals exactly.
\end{proof}

\stationary*

\begin{proof}
We establish the three claims---characterization, independence, and
existence---in turn.
 
\paragraph{Stationarity characterization.}
In the population limit ($M \to \infty$), the pooled proposals from
particles distributed as~$\mu$ have aggregate density
\begin{equation}\label{eq:pooled-proposal}
  q_\mu(y)
  \;=\; \int q(y \mid x)\, \mu(x)\, dx
  \;=\; \int \mathcal{N}(y;\, x,\, \sigma^2 I)\, \mu(x)\, dx
  \;=\; [\mu * G_{\sigma^2}](y),
\end{equation}
where $q(y \mid x) = \mathcal{N}(y;\, x,\, \sigma^2 I)$ is the
score-free proposal kernel.
 
With standard target weights $b(y) \propto \pi(y)$, the effective target
marginal of the coupling is
\begin{equation}\label{eq:effective-target}
  \beta_\mu(y) \;\propto\; \pi(y)\, q_\mu(y)
  \;=\; \pi(y)\, [\mu * G_{\sigma^2}](y).
\end{equation}
This is the product of two sources of information: the target density
$\pi$ enters through the weights, and the proposal density $q_\mu$
enters through the support of the pooled proposals.
 
The balanced coupling $\gamma \in \mathcal{C}[\mu, \beta_\mu]$ enforces
both marginals exactly.  In particular, the target marginal constraint
gives, for any Borel set~$A$,
\begin{equation}\label{eq:target-marginal-constraint}
  \int \gamma(A \mid x)\, \mu(dx) \;=\; \beta_\mu(A).
\end{equation}
Under the categorical update, each particle $x_i$ is replaced by a
sample from $\gamma(\cdot \mid x_i)$.  In the mean-field limit
($N \to \infty$), the law of the updated ensemble is therefore
\[
  \mu^{\mathrm{new}}(A)
  \;=\; \int \gamma(A \mid x)\, \mu(dx)
  \;=\; \beta_\mu(A).
\]
That is, the aggregate output law after one ETD step is exactly
$\beta_\mu$, regardless of how individual particles are assigned.
 
The stationarity condition in \Cref{thm:stationary} requires $\mu^* = \beta_{\mu^*}$, i.e.:
\[
  \mu^*(y) \;=\; \beta_{\mu^*}(y)
  \;\propto\; \pi(y)\, [\mu^* * G_{\sigma^2}](y).
\]
Conversely, any $\mu^*$ satisfying this condition yields $\beta_{\mu^*} = \mu^*$, so the
dynamics are stationary.
 
\textbf{Independence from $c$ and $\varepsilon$.}
The key observation is that the balanced marginal constraint
\eqref{eq:target-marginal-constraint} is \emph{definitional}: the
Sinkhorn scaling vectors $u$ and $v$ adjust to enforce
$\Gamma^\top \mathbf{1}_N = \mathbf{b}$ regardless of the cost matrix
$C_{ij} = c(x_i, y_j)$ and the regularization parameter~$\varepsilon$.
Different choices of $(c, \varepsilon)$ change the internal structure of
the coupling---which specific proposal each particle is assigned
to---but the aggregate target marginal $\beta_\mu$ is invariant.
Since the stationarity condition $\mu^* = \beta_{\mu^*}$ involves only
$\beta_{\mu^*}$, the parameters $c$ and $\varepsilon$ do not appear in the stationarity condition.
 
\emph{Remark.}  While $(c, \varepsilon)$ do not affect the stationary
distribution, they do affect the transition kernel
$\gamma(\cdot \mid x)$ and therefore the \emph{mixing rate} of the
finite-$N$ chain.
 
\textbf{Existence.}
Define the nonlinear map
$T \colon \mathcal{P}_2(\mathbb{R}^n) \to \mathcal{P}_2(\mathbb{R}^n)$
by
\begin{equation}\label{eq:T-map}
  T(\mu)(y) \;=\;
  \frac{\pi(y)\, [\mu * G_{\sigma^2}](y)}
       {\int \pi(z)\, [\mu * G_{\sigma^2}](z)\, dz}.
\end{equation}
The denominator is finite for any $\mu \in \mathcal{P}_2(\mathbb{R}^n)$
and $\sigma > 0$, since $\pi$ has finite second moment and $G_{\sigma^2}$
is integrable.  A fixed point of $T$ is a solution to the stationarity condition.
 
To apply Schauder's fixed-point theorem, we work in the space
$\mathcal{P}_2(\mathbb{R}^n)$ equipped with the topology of weak
convergence.  We claim that:
\begin{enumerate}[label=(\alph*)]
  \item $T$ maps $\mathcal{P}_2(\mathbb{R}^n)$ into a tight (hence
        relatively compact) family.  To see this, note that
        $T(\mu)(y) \leq C_\pi\, \pi(y)$ for all $\mu$, where $C_\pi$
        depends only on $\pi$ and $\sigma$.  This follows because
        $[\mu * G_{\sigma^2}](y) \leq (2\pi\sigma^2)^{-n/2}$ uniformly
        in $y$ and $\mu$.  Since $\pi$ has a finite second moment, the
        family $\{T(\mu) : \mu \in \mathcal{P}_2(\mathbb{R}^n)\}$ is
        dominated by a fixed integrable function, which implies
        tightness.
  \item $T$ is continuous in the weak topology.  Let
        $\mu_k \rightharpoonup \mu$.  For each $y$,
        $[\mu_k * G_{\sigma^2}](y) = \int \mathcal{N}(y; x, \sigma^2 I)\,
        \mu_k(dx) \to [\mu * G_{\sigma^2}](y)$ by definition of weak
        convergence (since $x \mapsto \mathcal{N}(y; x, \sigma^2 I)$ is
        bounded and continuous).  Dominated convergence then gives
        $T(\mu_k) \rightharpoonup T(\mu)$.
\end{enumerate}
By Prokhorov's theorem, the image of $T$ is contained in a weakly
compact set $\mathcal{K} \subset \mathcal{P}_2(\mathbb{R}^n)$.  Since
$T(\mathcal{K}) \subseteq \mathcal{K}$ and $T$ is continuous, Schauder's
theorem guarantees a fixed point $\mu^* = T(\mu^*)$.
\end{proof}

\debiasing*

\begin{proof}
 
\textbf{(i) Stationarity.}
With importance-corrected weights, the effective target marginal
(cf.\ \eqref{eq:effective-target}) becomes
\begin{equation}\label{eq:ic-cancellation}
  \beta_\mu(y)
  \;\propto\; b^{\mathrm{IC}}(y)\, q_\mu(y)
  \;=\; \frac{\pi(y)}{q_\mu(y)} \cdot q_\mu(y)
  \;=\; \pi(y).
\end{equation}
The proposal density cancels exactly, leaving $\beta_\mu = \pi$
regardless of the current particle distribution~$\mu$.
 
Now assume $\mu = \pi$ (the stationarity hypothesis).  The balanced
coupling solves entropic OT between source~$\alpha = \pi$ and
target~$\beta = \pi$.  In the population limit, the Sinkhorn
coupling $\gamma$ has the form
\[
  \gamma(x, y) \;=\; u(x)\, K(x, y)\, v(y),
  \qquad K(x, y) = \exp\!\bigl(-c(x,y)/\varepsilon\bigr),
\]
where the scaling functions $u, v > 0$ enforce the marginal constraints:
\begin{equation}\label{eq:balanced-marginals-proof}
  \int \gamma(x, y)\, dy \;=\; \pi(x), \qquad
  \int \gamma(x, y)\, dx \;=\; \pi(y).
\end{equation}
Under the categorical update, the law of the updated ensemble is
\[
  \mu^{\mathrm{new}}(A)
  \;=\; \int \gamma(A \mid x)\, \pi(x)\, dx
  \;=\; \int_A \!\!\int \gamma(x,y)\, dx\, dy
  \;=\; \int_A \pi(y)\, dy
  \;=\; \pi(A),
\]
where the penultimate equality uses the target marginal constraint in
\eqref{eq:balanced-marginals-proof}.  Hence $\pi$ is stationary.
 
Note that this argument depends only on the marginal constraint, not on
the specific form of the Gibbs kernel or the value of~$\varepsilon$.
The stationarity therefore holds for any non-negative cost~$c$ and any
$\varepsilon > 0$.

\textbf{(ii) Reversibility (symmetric cost).} Assume $c(x,y) = c(y,x)$, so $K(x,y) = K(y,x)$.  Under the stationarity hypothesis $\mu = \pi$, the coupling solves balanced entropic OT between
$\pi$ (source) and $\pi$ (target) with symmetric kernel~$K$.
 
We show that the scaling functions coincide, $u = v$, which implies $\gamma(x,y) = \gamma(y,x)$. By the Sinkhorn factorization, $\gamma(x,y) = u(x)\, K(x,y)\, v(y)$
with $u, v$ satisfying
\begin{align}
  u(x) \int K(x,y)\, v(y)\, dy &= \pi(x), \label{eq:u-eq} \\
  v(y) \int K(x,y)\, u(x)\, dx &= \pi(y). \label{eq:v-eq}
\end{align}
Since $K$ is symmetric and both marginals equal~$\pi$, if $(u, v)$
satisfies \eqref{eq:u-eq}--\eqref{eq:v-eq}, then so does $(v, u)$
(swap the labels and use $K(x,y) = K(y,x)$).  By the uniqueness of the
Sinkhorn solution for strictly positive
kernels~\citep{PeyreCuturi19}, $u = v$ (up to a multiplicative constant
absorbed by the normalization).
 
Therefore,
\[
  \gamma(x, y) \;=\; u(x)\, K(x,y)\, u(y)
  \;=\; u(y)\, K(y,x)\, u(x)
  \;=\; \gamma(y, x).
\]
Detailed balance follows:
\[
  \pi(x)\, \gamma(y \mid x)
  \;=\; \gamma(x, y)
  \;=\; \gamma(y, x)
  \;=\; \pi(y)\, \gamma(x \mid y),
\]
so the chain is $\pi$-reversible.
\end{proof}

\subsection*{Metropolis Correction for the Semi-Relaxed Coupling}
\label{app:metropolis}

For the semi-relaxed coupling ($\tau = 0$) with score-free proposals 
($\alpha = 0$), the $M \to \infty$ transition kernel from $x$ to $y$ is
\begin{equation}\label{eq:sr-transition}
  K_{\mathrm{SR}}(x, \dd y) 
  \;=\; \frac{\pi(y)\,
    \exp\!\bigl(-\|x - y\|^2 / 2\varepsilon_\mathrm{eff}\bigr)}{Z(x)}\,\dd y,
\end{equation}
where $\varepsilon_\mathrm{eff}^{-1} = \varepsilon^{-1} + \sigma^{-2}$ 
combines the Gibbs kernel and the proposal variance, and:
\begin{align}
Z(x) = \int \pi(y)\,\exp(-\|x - y\|^2/2\varepsilon_\mathrm{eff})\,\dd y.
\end{align}

The Metropolis--Hastings acceptance ratio for a proposed move $x \to y$ is
\begin{align}
  \alpha_{\mathrm{MH}}(x, y)
  &= \min\!\biggl(1,\;
    \frac{\pi(y)\, K_{\mathrm{SR}}(y, x)}
         {\pi(x)\, K_{\mathrm{SR}}(x, y)}\biggr) \notag \\
  &= \min\!\biggl(1,\;
    \frac{\pi(y)}{\pi(x)} \cdot
    \frac{\pi(x)\,\exp(-\|y - x\|^2/2\varepsilon_\mathrm{eff})\,/\,Z(y)}
         {\pi(y)\,\exp(-\|x - y\|^2/2\varepsilon_\mathrm{eff})\,/\,Z(x)}
    \biggr) \notag \\
  &= \min\!\biggl(1,\; \frac{Z(x)}{Z(y)}\biggr).
  \label{eq:mh-ratio}
\end{align}
The simplification follows because the Gibbs kernel is symmetric in $(x,y)$
and the target density ratios cancel.  The corrected chain has stationary 
distribution~$\pi$ by construction.  Moves from low-density neighborhoods 
($Z(x)$ small) to high-density neighborhoods ($Z(y)$ large) are penalized, 
counteracting the over-concentration bias of 
\Cref{thm:stationary}.

\paragraph{Cost.}
The normalizing constant $Z(x_i) = \sum_j \pi(y_j)\,
\exp(-\|x_i - y_j\|^2/2\varepsilon_\mathrm{eff})$ is already computed
during the coupling step.  Evaluating $Z(y_j)$ at the accepted proposal 
requires the proposal--proposal kernel sums, adding $O(M^2)$ work per 
iteration---the same order as the existing $O(NM)$ coupling when $M > N$.

\paragraph{Score-guided case.}
When $\alpha > 0$, the proposal density 
$q_x(y) = \mathcal{N}(y;\, x + \alpha\,\partial_x \log\pi(x),\, \sigma^2 I)$
is asymmetric ($q_x(y) \neq q_y(x)$), introducing a MALA-type correction:
\begin{equation}\label{eq:mh-score}
  \alpha_{\mathrm{MH}}(x, y) 
  = \min\!\biggl(1,\; \frac{Z(x)}{Z(y)} \cdot
    \frac{q_y(x)}{q_x(y)}\biggr),
\end{equation}
which requires one additional score evaluation at the proposed point~$y$.

\begin{remark}[Metropolis Correction and the Balanced Coupling]
\label{rem:mh-balanced}
The Metropolis correction above applies to the \emph{semi-relaxed} coupling, 
where the conditional $\gamma(y \mid x_i)$ depends only on the current 
particle~$x_i$ and defines a single-particle transition kernel amenable to 
standard MH theory.  The \emph{balanced} coupling breaks this structure: the 
Sinkhorn scalings $(u, v)$ are determined jointly by both marginal constraints, 
making the conditional 
$\gamma(y \mid x_i) \propto u(x_i)\,K(x_i, y)\,v(y)$ 
a function of the \emph{entire} ensemble $\{x_1, \ldots, x_N\}$ through the 
dual variable~$v$.  This is the conditional of an interacting particle system, 
not a single-particle Markov kernel.  Standard MH requires a proposal kernel 
$Q(x, \dd y)$ that depends on~$x$ alone; a joint MH step over all~$N$ 
particles is formally valid but impractical, as the acceptance rate deteriorates 
exponentially in~$N$.  The balanced coupling instead admits the 
importance-corrected weights of \Cref{thm:debiasing} as its bias-elimination 
mechanism.
\end{remark}

\begin{table}[h]
\centering
\caption{Bias-correction strategies across the $\tau$-family.  ``N/A'' 
indicates the correction is structurally ill-defined 
(\Cref{rem:mh-balanced}).}
\label{tab:bias-correction}
\small
\begin{tabular}{@{}lcc@{}}
\toprule
\textbf{Correction} & \textbf{Semi-relaxed} ($\tau = 0$) 
  & \textbf{Balanced} ($\tau \to \infty$) \\
\midrule
None ($b \propto \pi(y)$) 
  & Biased & Biased \\
Importance ($b \propto \pi/q$) 
  & Bias reduced & \textbf{Unbiased} (\Cref{thm:debiasing}) \\
Metropolis--Hastings 
  & \textbf{Unbiased} \eqref{eq:mh-ratio} & N/A \\
\bottomrule
\end{tabular}
\end{table}



\newpage
\section{Experiment Setup}
\label{app:experiment_setup}

All ETD method names follow the convention \texttt{ETD-\{coupling\}-\{cost\}}, with optional score-guided or score-free mode indicated in the surrounding table section. The coupling label specifies the entropic transport constraint family: \texttt{SR} denotes the semi-relaxed coupling, \texttt{UB} denotes the unbalanced coupling, and \texttt{BAL} denotes the balanced coupling. The cost label specifies the transport cost: \texttt{Euc} uses squared Euclidean distance, \texttt{Maha} uses a diagonal Mahalanobis distance based on the current particle covariance, and \texttt{Mom} uses the Euclidean cost with coupling-momentum proposal drift. An \texttt{(IS)} suffix indicates proposal-density importance correction is applied when computing the Gibbs weights used as the target marginal in the transport coupling. Hyperparameter symbols used in the tables are summarized in Tables \ref{tab-app:etd_hyperparam_key} and \ref{tab-app:baseline_hyperparam_key}. The Gibbs inverse temperature $\beta$ rescales the target weights as $b_j \propto \pi(y_j)^\beta$, which is equivalent to replacing the potential $V$ with $\beta V$; this does not change the algorithm structurally but allows the optimizer to control the sharpness of the target weighting independently of the proposal parameters.

We ran all the experiments on an Intel i9-13900K PC with 64GB RAM and an NVIDIA RTX 4090 (24GB VRAM) GPU. All methods are implemented in Python using the JAX library for vectorization and efficient GPU execution. Runtime varied by benchmark from minutes for the 2D energy-function and variance-collapse experiments to several hours for the larger Covertype, BNN UCI, and LJ-13 experiments; all reported runs fit within the 24GB memory of a single RTX 4090.

\begin{table}[h]
  \caption{Hyperparameter symbols used in ETD tuning tables.}
  \centering
  \small
  \begin{tabular}{@{}ll@{}}
    \toprule
    \textbf{Symbol} & \textbf{Meaning} \\
    \midrule
    $\epsilon$ & Entropic regularization in the transport coupling \\
    $\beta$    & Gibbs inverse temperature used to weight proposals by target density \\
    $\alpha$   & Fraction of proposals drawn from local particle-centered proposals \\
    $\sigma$   & Standard deviation of local Gaussian proposal noise \\
    $\tau$     & Unbalanced marginal-relaxation parameter; used only by ETD-UB \\
    $\mu$      & Coupling-momentum coefficient; used only by Mom variants \\
    \bottomrule
  \end{tabular}
  \label{tab-app:etd_hyperparam_key}
\end{table}

\begin{table}[h]
  \caption{Baseline hyperparameter notation used in appendix tables.}
  \centering
  \small
  \begin{tabular}{@{}lll@{}}
    \toprule
    \textbf{Method} & \textbf{Symbol} & \textbf{Meaning} \\
    \midrule
    SVGD     & step size        & AdaGrad/SVGD learning-rate scale \\
    SGLD     & $a$              & numerator in the decaying step-size schedule \\
    SGLD     & $b$              & additive offset in the decaying step-size schedule \\
    SGLD     & $\gamma$         & decay exponent in the step-size schedule \\
    AGF-SVGD & lr               & learning rate \\
    AGF-SVGD & annealing stages & number of annealing stages \\
    AGF-SVGD & smoothing bw     & kernel-density smoothing bandwidth scale \\
    AGF-SVGD & base scale       & base proposal/kernel scale used by AGF-SVGD \\
    \bottomrule
  \end{tabular}
  \label{tab-app:baseline_hyperparam_key}
\end{table}

\subsection{Variance Collapse}

\textbf{Tuning.} All methods are tuned at $d=50$ with $|\mathrm{DAMV}-1|$ as the tuning objective, averaged over 3 seeds per trial. All methods use 50 Optuna TPE trials. Tuning uses 500 iterations while final evaluation uses 2000 iterations across $d \in \{10, 20, 50, 100, 200\}$ with 5 seeds. The hyperparameters are provided in Table~\ref{tab-app:vc_tuned}.

\begin{table}[h]
  \caption{Variance Collapse: Tuned hyperparameters for ETD variants shown in Table~\ref{tab-app:vc_results}. All use $N=50$ particles and 2000 iterations (tuned at $d=50$ with 500 iterations). For ETD, $M=500$ proposals.}
  \centering
  \scriptsize
  \begin{tabular}{@{}l cccccc@{}}
    \toprule
    \textbf{Method} & $\epsilon$ & $\beta$ & $\alpha$ & $\sigma$ & $\tau$ & $\mu$ \\
    \midrule
    \multicolumn{7}{@{}l}{\textit{Score-free}} \\[2pt]
    ETD-BAL-Euc       & 0.404 & 0.221 & 0.929 & 0.579 & ---   & ---   \\
    ETD-BAL-Maha      & 0.676 & 0.185 & 0.539 & 0.541 & ---   & ---   \\
    ETD-BAL-Mom       & 0.190 & 0.168 & 0.710 & 0.583 & ---   & 0.352 \\
    ETD-SR-Euc        & 2.428 & 0.381 & 0.756 & 0.858 & ---   & ---   \\
    ETD-SR-Maha       & 1.445 & 0.281 & 0.863 & 1.000 & ---   & ---   \\
    ETD-SR-Mom        & 0.519 & 0.175 & 0.840 & 0.724 & ---   & 0.397 \\
    ETD-UB-Maha       & 1.225 & 0.336 & 0.962 & 0.733 & 8.810 & ---   \\
    ETD-UB-Maha (IS)  & 4.807 & 1.129 & 0.818 & 0.654 & 1.975 & ---   \\
    ETD-UB-Mom        & 0.418 & 0.132 & 0.624 & 0.501 & 4.447 & 0.324 \\
    \midrule
    \multicolumn{7}{@{}l}{\textit{Score-guided}} \\[2pt]
    ETD-BAL-Euc       & 0.062 & 0.283 & 0.863 & 0.861 & ---   & ---   \\
    ETD-BAL-Euc (IS)  & 1.448 & 0.971 & 0.993 & 1.144 & ---   & ---   \\
    ETD-BAL-Maha      & 0.058 & 0.190 & 0.642 & 0.749 & ---   & ---   \\
    ETD-BAL-Mom (IS)  & 1.785 & 0.927 & 0.520 & 1.082 & ---   & 0.123 \\
    ETD-SR-Euc        & 0.071 & 0.246 & 0.982 & 0.346 & ---   & ---   \\
    ETD-SR-Euc (IS)   & 0.050 & 1.565 & 0.817 & 0.426 & ---   & ---   \\
    ETD-SR-Maha (IS)  & 0.093 & 0.541 & 0.997 & 0.627 & ---   & ---   \\
    ETD-SR-Mom        & 0.068 & 0.174 & 0.781 & 0.326 & ---   & 0.079 \\
    ETD-UB-Maha       & 0.163 & 0.313 & 0.212 & 0.664 & 0.268 & ---   \\
    ETD-UB-Mom        & 0.473 & 0.200 & 0.896 & 0.822 & 4.307 & 0.405 \\
    \midrule
    \multicolumn{7}{@{}l}{\textit{Baselines}} \\[2pt]
    SVGD     & \multicolumn{6}{l}{step size $= 0.0817$} \\
    SGLD     & \multicolumn{6}{l}{$a = 0.0883$,\; $b = 1.0$,\; $\gamma = 0.55$} \\
    AGF-SVGD & \multicolumn{6}{l}{lr $= 1.71 \times 10^{-4}$,\; annealing stages $= 59$,\; smoothing bw $= 0.077$,\; base scale $= 1.37$} \\
    \bottomrule
  \end{tabular}
  \label{tab-app:vc_tuned}
\end{table}

\begin{table}[h]
  \caption{Variance Collapse: Results for the top 3 score-free and top 3 score-guided ETD variants per dimension. DAMV: dimension-averaged marginal variance (mean $\pm$ SE, 5 seeds); $|\text{DAMV}-1|$: absolute deviation from target variance ($\text{DAMV}=1$ is exact recovery).}
  \centering
  \scriptsize
  \setlength{\tabcolsep}{2.5pt}

  \begin{minipage}[t]{0.48\textwidth}\centering
  \textbf{$d=10$}\\[3pt]
  \begin{tabular}{@{}l cc@{}}
    \toprule
    \textbf{Method} & DAMV & $|\text{DAMV}{-}1|$ \\
    \midrule
    \multicolumn{3}{@{}l}{\textit{Score-free}} \\[1pt]
    ETD-BAL-Euc       & $1.026 \pm 0.029$ & $0.026$ \\
    ETD-SR-Euc        & $1.037 \pm 0.026$ & $0.037$ \\
    ETD-BAL-Mom       & $0.943 \pm 0.038$ & $0.057$ \\
    \multicolumn{3}{@{}l}{\textit{Score-guided}} \\[1pt]
    ETD-BAL-Euc (IS)  & $1.005 \pm 0.027$ & $0.005$ \\
    ETD-UB-Maha       & $0.988 \pm 0.045$ & $0.012$ \\
    ETD-UB-Mom        & $1.033 \pm 0.061$ & $0.033$ \\
    \multicolumn{3}{@{}l}{\textit{Baselines}} \\[1pt]
    SVGD              & $0.415 \pm 0.008$ & $0.585$ \\
    SGLD              & $0.980 \pm 0.026$ & $0.020$ \\
    AGF-SVGD          & $4.058 \pm 0.145$ & $3.058$ \\
    \bottomrule
  \end{tabular}
  \end{minipage}%
  \hfill
  \begin{minipage}[t]{0.48\textwidth}\centering
  \textbf{$d=20$}\\[3pt]
  \begin{tabular}{@{}l cc@{}}
    \toprule
    \textbf{Method} & DAMV & $|\text{DAMV}{-}1|$ \\
    \midrule
    \multicolumn{3}{@{}l}{\textit{Score-free}} \\[1pt]
    ETD-UB-Maha       & $1.007 \pm 0.049$ & $0.007$ \\
    ETD-BAL-Maha      & $1.011 \pm 0.053$ & $0.011$ \\
    ETD-BAL-Mom       & $1.034 \pm 0.020$ & $0.034$ \\
    \multicolumn{3}{@{}l}{\textit{Score-guided}} \\[1pt]
    ETD-BAL-Euc (IS)  & $1.001 \pm 0.026$ & $0.001$ \\
    ETD-UB-Maha       & $1.007 \pm 0.039$ & $0.007$ \\
    ETD-SR-Maha (IS)  & $0.957 \pm 0.058$ & $0.043$ \\
    \multicolumn{3}{@{}l}{\textit{Baselines}} \\[1pt]
    SVGD              & $0.268 \pm 0.005$ & $0.732$ \\
    SGLD              & $0.988 \pm 0.021$ & $0.012$ \\
    AGF-SVGD          & $4.048 \pm 0.085$ & $3.048$ \\
    \bottomrule
  \end{tabular}
  \end{minipage}

  \vspace{10pt}

  \begin{minipage}[t]{0.48\textwidth}\centering
  \textbf{$d=50$ (tuning dimension)}\\[3pt]
  \begin{tabular}{@{}l cc@{}}
    \toprule
    \textbf{Method} & DAMV & $|\text{DAMV}{-}1|$ \\
    \midrule
    \multicolumn{3}{@{}l}{\textit{Score-free}} \\[1pt]
    ETD-UB-Maha       & $0.964 \pm 0.045$ & $0.036$ \\
    ETD-BAL-Mom       & $1.036 \pm 0.024$ & $0.036$ \\
    ETD-UB-Maha (IS)  & $0.963 \pm 0.092$ & $0.037$ \\
    \multicolumn{3}{@{}l}{\textit{Score-guided}} \\[1pt]
    ETD-BAL-Euc (IS)  & $1.002 \pm 0.016$ & $0.002$ \\
    ETD-BAL-Euc       & $0.998 \pm 0.054$ & $0.002$ \\
    ETD-SR-Euc        & $1.004 \pm 0.010$ & $0.004$ \\
    \multicolumn{3}{@{}l}{\textit{Baselines}} \\[1pt]
    SVGD              & $0.159 \pm 0.004$ & $0.841$ \\
    SGLD              & $0.980 \pm 0.014$ & $0.020$ \\
    AGF-SVGD          & $3.919 \pm 0.045$ & $2.919$ \\
    \bottomrule
  \end{tabular}
  \end{minipage}%
  \hfill
  \begin{minipage}[t]{0.48\textwidth}\centering
  \textbf{$d=100$}\\[3pt]
  \begin{tabular}{@{}l cc@{}}
    \toprule
    \textbf{Method} & DAMV & $|\text{DAMV}{-}1|$ \\
    \midrule
    \multicolumn{3}{@{}l}{\textit{Score-free}} \\[1pt]
    ETD-SR-Mom        & $1.002 \pm 0.023$ & $0.002$ \\
    ETD-BAL-Mom       & $1.029 \pm 0.033$ & $0.029$ \\
    ETD-UB-Mom        & $0.769 \pm 0.016$ & $0.231$ \\
    \multicolumn{3}{@{}l}{\textit{Score-guided}} \\[1pt]
    ETD-BAL-Maha      & $0.991 \pm 0.066$ & $0.010$ \\
    ETD-SR-Euc        & $0.988 \pm 0.010$ & $0.012$ \\
    ETD-BAL-Mom (IS)  & $1.021 \pm 0.007$ & $0.021$ \\
    \multicolumn{3}{@{}l}{\textit{Baselines}} \\[1pt]
    SVGD              & $0.127 \pm 0.002$ & $0.873$ \\
    SGLD              & $0.975 \pm 0.012$ & $0.025$ \\
    AGF-SVGD          & $3.958 \pm 0.048$ & $2.958$ \\
    \bottomrule
  \end{tabular}
  \end{minipage}

  \vspace{10pt}

  \begin{minipage}[t]{0.48\textwidth}\centering
  \textbf{$d=200$}\\[3pt]
  \begin{tabular}{@{}l cc@{}}
    \toprule
    \textbf{Method} & DAMV & $|\text{DAMV}{-}1|$ \\
    \midrule
    \multicolumn{3}{@{}l}{\textit{Score-free}} \\[1pt]
    ETD-SR-Mom        & $0.713 \pm 0.062$ & $0.287$ \\
    ETD-SR-Maha       & $0.670 \pm 0.292$ & $0.330$ \\
    ETD-UB-Maha (IS)  & $0.542 \pm 0.074$ & $0.458$ \\
    \multicolumn{3}{@{}l}{\textit{Score-guided}} \\[1pt]
    ETD-BAL-Mom (IS)  & $1.003 \pm 0.017$ & $0.003$ \\
    ETD-SR-Euc (IS)   & $0.971 \pm 0.041$ & $0.029$ \\
    ETD-SR-Mom        & $1.034 \pm 0.011$ & $0.034$ \\
    \multicolumn{3}{@{}l}{\textit{Baselines}} \\[1pt]
    SVGD              & $0.110 \pm 0.003$ & $0.890$ \\
    SGLD              & $0.974 \pm 0.006$ & $0.026$ \\
    AGF-SVGD          & $3.971 \pm 0.033$ & $2.971$ \\
    \bottomrule
  \end{tabular}
  \end{minipage}
  \label{tab-app:vc_results}
\end{table}

\subsection{2D Energy Functions}

\textbf{Description.} We use the four two-dimensional energy-function targets $U_1$--$U_4$ adapted from \citet{Rezende15}. For $U_2$--$U_4$, we add a weak confinement term $+0.1|z_1|$ to the energy so that the targets define proper normalizable distributions on $\mathbb{R}^2$, enabling exact reference sampling. We additionally include an 8-mode ring Gaussian mixture ($\sigma=0.5$, radius 5) as a mode-coverage stress target. Ground truth reference samples (10K per target) are generated via grid sampling for $U_1$ and exact sampling for $U_2$--$U_4$ and the ring GMM.

\textbf{Tuning.} Each method is tuned independently on each of the five targets with mean absolute energy-distance as the tuning objective, averaged over 3 seeds per trial. All methods use 50 Optuna TPE trials per target. The hyperparameters are provided in Table~\ref{tab-app:viz2d_tuned}.

\begin{table}[h]
  \caption{2D Energy Functions: Tuned hyperparameters for the top 3 score-free and top 3 score-guided ETD variants per target. All use $N=50$ particles and 500 iterations. For ETD, $M=500$ proposals.}
  \centering
  \scriptsize
  \begin{tabular}{@{}l cccccc@{}}
    \toprule
    \textbf{Method} & $\epsilon$ & $\beta$ & $\alpha$ & $\sigma$ & $\tau$ & $\mu$ \\
    \midrule
    \multicolumn{7}{@{}l}{\textbf{$U_1$ (Ring + bumps)}} \\[2pt]
    \multicolumn{7}{@{}l}{\textit{Score-free}} \\[2pt]
    ETD-BAL-Maha (IS) & 0.0176 & 0.276 & 0.355 & 1.23 & ---   & ---   \\
    ETD-UB-Mom (IS)   & 0.123  & 1.82  & 0.938 & 1.21 & 6.88  & 0.228 \\
    ETD-UB-Euc (IS)   & 0.072  & 0.322 & 0.639 & 1.30 & 0.158 & ---   \\
    \multicolumn{7}{@{}l}{\textit{Score-guided}} \\[2pt]
    ETD-UB-Maha       & 0.0693 & 0.172 & 0.242 & 0.135 & 9.51 & ---   \\
    ETD-BAL-Maha      & 0.0402 & 0.851 & 0.529 & 1.51  & ---  & ---   \\
    ETD-BAL-Euc       & 0.0818 & 0.478 & 0.305 & 1.54  & ---  & ---   \\
    \multicolumn{7}{@{}l}{\textit{Baselines}} \\[2pt]
    SVGD     & \multicolumn{6}{l}{step size $= 0.0558$} \\
    SGLD     & \multicolumn{6}{l}{$a = 0.717$,\; $b = 1.0$,\; $\gamma = 0.55$} \\
    AGF-SVGD & \multicolumn{6}{l}{lr $= 0.119$,\; annealing stages $= 191$,\; smoothing bw $= 0.522$,\; base scale $= 2.61$} \\
    \midrule
    \multicolumn{7}{@{}l}{\textbf{$U_2$ (Banana)}} \\[2pt]
    \multicolumn{7}{@{}l}{\textit{Score-free}} \\[2pt]
    ETD-UB-Mom (IS)   & 0.126  & 0.569 & 0.794 & 1.58 & 4.48  & 0.249 \\
    ETD-SR-Mom (IS)   & 0.122  & 0.463 & 0.807 & 2.79 & ---   & 0.075 \\
    ETD-SR-Euc (IS)   & 0.107  & 0.481 & 0.865 & 2.07 & ---   & ---   \\
    \multicolumn{7}{@{}l}{\textit{Score-guided}} \\[2pt]
    ETD-UB-Mom (IS)   & 0.112  & 0.404 & 0.426 & 1.89 & 0.421 & 0.428 \\
    ETD-BAL-Mom (IS)  & 0.0187 & 0.973 & 0.936 & 1.18 & ---   & 0.187 \\
    ETD-BAL-Maha (IS) & 0.0229 & 0.920 & 0.915 & 1.37 & ---   & ---   \\
    \multicolumn{7}{@{}l}{\textit{Baselines}} \\[2pt]
    SVGD     & \multicolumn{6}{l}{step size $= 0.903$} \\
    SGLD     & \multicolumn{6}{l}{$a = 0.369$,\; $b = 1.0$,\; $\gamma = 0.55$} \\
    AGF-SVGD & \multicolumn{6}{l}{lr $= 0.967$,\; annealing stages $= 872$,\; smoothing bw $= 0.436$,\; base scale $= 19.0$} \\
    \midrule
    \multicolumn{7}{@{}l}{\textbf{$U_3$ (Parallel modes)}} \\[2pt]
    \multicolumn{7}{@{}l}{\textit{Score-free}} \\[2pt]
    ETD-BAL-Euc (IS)  & 0.0123 & 0.543 & 0.553 & 1.46 & ---  & ---   \\
    ETD-BAL-Mom (IS)  & 0.0285 & 1.03  & 0.953 & 1.57 & ---  & 0.061 \\
    ETD-SR-Mom (IS)   & 0.0526 & 0.327 & 0.881 & 1.06 & ---  & 0.444 \\
    \multicolumn{7}{@{}l}{\textit{Score-guided}} \\[2pt]
    ETD-UB-Mom (IS)   & 0.348  & 0.708 & 0.955 & 1.36 & 3.63  & 0.232 \\
    ETD-UB-Euc (IS)   & 0.135  & 0.497 & 0.591 & 1.41 & 0.301 & ---   \\
    ETD-BAL-Maha (IS) & 0.0291 & 0.789 & 0.660 & 1.51 & ---   & ---   \\
    \multicolumn{7}{@{}l}{\textit{Baselines}} \\[2pt]
    SVGD     & \multicolumn{6}{l}{step size $= 0.998$} \\
    SGLD     & \multicolumn{6}{l}{$a = 0.304$,\; $b = 1.0$,\; $\gamma = 0.55$} \\
    AGF-SVGD & \multicolumn{6}{l}{lr $= 0.949$,\; annealing stages $= 752$,\; smoothing bw $= 0.353$,\; base scale $= 112.8$} \\
    \midrule
    \multicolumn{7}{@{}l}{\textbf{$U_4$ (Sigmoid offset)}} \\[2pt]
    \multicolumn{7}{@{}l}{\textit{Score-free}} \\[2pt]
    ETD-UB-Euc (IS)   & 0.0197 & 0.826 & 0.638 & 1.43  & 1.27 & ---   \\
    ETD-BAL-Mom (IS)  & 0.0152 & 0.902 & 0.900 & 1.11  & ---  & 0.190 \\
    ETD-BAL-Euc (IS)  & 0.123  & 0.750 & 0.788 & 2.89  & ---  & ---   \\
    \multicolumn{7}{@{}l}{\textit{Score-guided}} \\[2pt]
    ETD-SR-Mom (IS)   & 0.0467 & 0.265 & 0.629 & 0.408 & ---  & 0.350 \\
    ETD-BAL-Maha (IS) & 0.401  & 0.338 & 0.191 & 0.623 & ---  & ---   \\
    ETD-UB-Maha (IS)  & 0.0484 & 0.277 & 0.258 & 0.519 & 5.73 & ---   \\
    \multicolumn{7}{@{}l}{\textit{Baselines}} \\[2pt]
    SVGD     & \multicolumn{6}{l}{step size $= 0.903$} \\
    SGLD     & \multicolumn{6}{l}{$a = 0.268$,\; $b = 1.0$,\; $\gamma = 0.55$} \\
    AGF-SVGD & \multicolumn{6}{l}{lr $= 0.643$,\; annealing stages $= 965$,\; smoothing bw $= 0.056$,\; base scale $= 80.4$} \\
    \midrule
    \multicolumn{7}{@{}l}{\textbf{8-mode Ring GMM}} \\[2pt]
    \multicolumn{7}{@{}l}{\textit{Score-free}} \\[2pt]
    ETD-UB-Euc       & 0.125 & 0.270 & 0.629 & 1.27  & 2.19 & ---   \\
    ETD-BAL-Mom (IS) & 0.452 & 0.503 & 0.907 & 0.758 & ---  & 0.384 \\
    ETD-UB-Mom       & 0.223 & 0.228 & 0.106 & 4.33  & 14.8 & 0.483 \\
    \multicolumn{7}{@{}l}{\textit{Score-guided}} \\[2pt]
    ETD-BAL-Maha     & 0.152  & 0.197 & 0.327 & 2.15 & ---  & ---   \\
    ETD-BAL-Mom      & 1.24   & 0.230 & 0.458 & 2.37 & ---  & 0.061 \\
    ETD-SR-Euc       & 0.0103 & 2.56  & 0.446 & 1.05 & ---  & ---   \\
    \multicolumn{7}{@{}l}{\textit{Baselines}} \\[2pt]
    SVGD     & \multicolumn{6}{l}{step size $= 0.0329$} \\
    SGLD     & \multicolumn{6}{l}{$a = 3.94$,\; $b = 1.0$,\; $\gamma = 0.55$} \\
    AGF-SVGD & \multicolumn{6}{l}{lr $= 0.0811$,\; annealing stages $= 589$,\; smoothing bw $= 0.022$,\; base scale $= 3.71$} \\
    \bottomrule
  \end{tabular}
  \label{tab-app:viz2d_tuned}
\end{table}

\begin{table}[h]
  \caption{2D Energy Functions: Results for the top 3 score-free and top 3 score-guided ETD variants per target. $|E_d|$: absolute energy distance (mean $\pm$ SE, 5 seeds). MMD$^2$: squared maximum mean discrepancy (mean).}
  \centering
  \scriptsize
  \setlength{\tabcolsep}{2.5pt}

  \begin{minipage}[t]{0.48\textwidth}\centering
  \textbf{$U_1$ (Ring + bumps)}\\[3pt]
  \begin{tabular}{@{}l cc@{}}
    \toprule
    \textbf{Method} & $|E_d|$ & MMD$^2$ \\
    \midrule
    \multicolumn{3}{@{}l}{\textit{Score-free}} \\[1pt]
    ETD-BAL-Maha (IS) & $0.002 \pm 0.005$ & $-0.002$ \\
    ETD-UB-Mom (IS)   & $0.002 \pm 0.006$ & $-0.002$ \\
    ETD-UB-Euc (IS)   & $0.004 \pm 0.004$ & $-0.003$ \\
    \multicolumn{3}{@{}l}{\textit{Score-guided}} \\[1pt]
    ETD-UB-Maha       & $0.001 \pm 0.010$ & $-0.002$ \\
    ETD-BAL-Maha      & $0.001 \pm 0.011$ & $-0.001$ \\
    ETD-BAL-Euc       & $0.004 \pm 0.007$ & $-0.001$ \\
    \multicolumn{3}{@{}l}{\textit{Baselines}} \\[1pt]
    SVGD     & $0.049 \pm 0.030$ & $0.011$ \\
    SGLD     & $0.017 \pm 0.009$ & $-0.003$ \\
    AGF-SVGD & $0.278 \pm 0.134$ & $0.048$ \\
    \bottomrule
  \end{tabular}
  \end{minipage}%
  \hfill
  \begin{minipage}[t]{0.48\textwidth}\centering
  \textbf{$U_2$ (Banana)}\\[3pt]
  \begin{tabular}{@{}l cr@{}}
    \toprule
    \textbf{Method} & $|E_d|$ & MMD$^2$ \\
    \midrule
    \multicolumn{3}{@{}l}{\textit{Score-free}} \\[1pt]
    ETD-UB-Mom (IS)   & $0.017 \pm 0.069$ & $-0.000$ \\
    ETD-SR-Mom (IS)   & $0.021 \pm 0.046$ & $0.004$ \\
    ETD-SR-Euc (IS)   & $0.097 \pm 0.066$ & $0.004$ \\
    \multicolumn{3}{@{}l}{\textit{Score-guided}} \\[1pt]
    ETD-UB-Mom (IS)   & $0.009 \pm 0.047$ & $-0.000$ \\
    ETD-BAL-Mom (IS)  & $0.058 \pm 0.049$ & $0.005$ \\
    ETD-BAL-Maha (IS) & $0.063 \pm 0.035$ & $0.001$ \\
    \multicolumn{3}{@{}l}{\textit{Baselines}} \\[1pt]
    SVGD     & $1.828 \pm 0.058$ & $0.099$ \\
    SGLD     & $0.039 \pm 0.078$ & $0.001$ \\
    AGF-SVGD & $0.322 \pm 0.118$ & $0.014$ \\
    \bottomrule
  \end{tabular}
  \end{minipage}

  \vspace{10pt}

  \begin{minipage}[t]{0.48\textwidth}\centering
  \textbf{$U_3$ (Parallel modes)}\\[3pt]
  \begin{tabular}{@{}l cc@{}}
    \toprule
    \textbf{Method} & $|E_d|$ & MMD$^2$ \\
    \midrule
    \multicolumn{3}{@{}l}{\textit{Score-free}} \\[1pt]
    ETD-BAL-Euc (IS)  & $0.014 \pm 0.094$ & $0.004$ \\
    ETD-BAL-Mom (IS)  & $0.023 \pm 0.044$ & $0.001$ \\
    ETD-SR-Mom (IS)   & $0.082 \pm 0.125$ & $0.001$ \\
    \multicolumn{3}{@{}l}{\textit{Score-guided}} \\[1pt]
    ETD-UB-Mom (IS)   & $0.060 \pm 0.025$ & $-0.003$ \\
    ETD-UB-Euc (IS)   & $0.063 \pm 0.055$ & $-0.002$ \\
    ETD-BAL-Maha (IS) & $0.143 \pm 0.040$ & $0.007$ \\
    \multicolumn{3}{@{}l}{\textit{Baselines}} \\[1pt]
    SVGD     & $2.002 \pm 0.045$ & $0.105$ \\
    SGLD     & $0.067 \pm 0.064$ & $0.002$ \\
    AGF-SVGD & $0.764 \pm 0.206$ & $0.032$ \\
    \bottomrule
  \end{tabular}
  \end{minipage}%
  \hfill
  \begin{minipage}[t]{0.48\textwidth}\centering
  \textbf{$U_4$ (Sigmoid offset)}\\[3pt]
  \begin{tabular}{@{}l cc@{}}
    \toprule
    \textbf{Method} & $|E_d|$ & MMD$^2$ \\
    \midrule
    \multicolumn{3}{@{}l}{\textit{Score-free}} \\[1pt]
    ETD-UB-Euc (IS)   & $0.061 \pm 0.027$ & $-0.002$ \\
    ETD-BAL-Mom (IS)  & $0.087 \pm 0.019$ & $0.004$ \\
    ETD-BAL-Euc (IS)  & $0.191 \pm 0.202$ & $0.008$ \\
    \multicolumn{3}{@{}l}{\textit{Score-guided}} \\[1pt]
    ETD-SR-Mom (IS)   & $0.540 \pm 0.082$ & $0.027$ \\
    ETD-BAL-Maha (IS) & $0.722 \pm 0.195$ & $0.025$ \\
    ETD-UB-Maha (IS)  & $1.228 \pm 0.329$ & $0.062$ \\
    \multicolumn{3}{@{}l}{\textit{Baselines}} \\[1pt]
    SVGD     & $1.832 \pm 0.054$ & $0.098$ \\
    SGLD     & $0.662 \pm 0.048$ & $0.035$ \\
    AGF-SVGD & $0.514 \pm 0.119$ & $0.015$ \\
    \bottomrule
  \end{tabular}
  \end{minipage}

  \vspace{10pt}

  \begin{minipage}[t]{0.48\textwidth}\centering
  \textbf{8-mode Ring GMM}\\[3pt]
  \begin{tabular}{@{}l cc@{}}
    \toprule
    \textbf{Method} & $|E_d|$ & MMD$^2$ \\
    \midrule
    \multicolumn{3}{@{}l}{\textit{Score-free}} \\[1pt]
    ETD-UB-Euc       & $0.001 \pm 0.034$ & $-0.001$ \\
    ETD-BAL-Mom (IS) & $0.003 \pm 0.049$ & $-0.001$ \\
    ETD-UB-Mom       & $0.005 \pm 0.016$ & $-0.002$ \\
    \multicolumn{3}{@{}l}{\textit{Score-guided}} \\[1pt]
    ETD-BAL-Maha     & $0.001 \pm 0.015$ & $-0.001$ \\
    ETD-BAL-Mom      & $0.001 \pm 0.019$ & $-0.002$ \\
    ETD-SR-Euc       & $0.003 \pm 0.018$ & $-0.001$ \\
    \multicolumn{3}{@{}l}{\textit{Baselines}} \\[1pt]
    SVGD     & $0.071 \pm 0.058$ & $0.001$ \\
    SGLD     & $0.050 \pm 0.013$ & $-0.004$ \\
    AGF-SVGD & $0.085 \pm 0.016$ & $0.006$ \\
    \bottomrule
  \end{tabular}
  \end{minipage}
  \label{tab-app:viz2d_results}
\end{table}

\subsection{Bayesian Logistic Regression}

For binary classification, the model uses a conditional Gaussian prior $\mathcal{N}(0, \alpha_b^{-1}I)$ on regression weight $w$ and a Gamma hyperprior on the precision $\alpha_b$. The inference target is the joint posterior $P(\theta |D)$ where, $\theta = [w, \log{\alpha_b}]$.

\subsubsection{German Credit}

All methods are tuned on a single 90/10 train/test split of German Credit with NLL as the tuning objective. ETD variants and AGF-SVGD are tuned with 200 Optuna TPE trials each while SVGD and SGLD use 50 trials. The hyperparameters are provided in Table \ref{tab-app:blr_german_tuned}.

\begin{table}[h]
  \caption{BLR German Credit: Tuned hyperparameters for selected score-free
  and score-guided ETD variants. All use $N=100$ particles and 2000 iterations.
  For ETD, $M=500$ proposals.}
  \centering
  \small
  \begin{tabular}{@{}l cccccc@{}}
    \toprule
    \textbf{Method} & $\epsilon$ & $\beta$ & $\alpha$ & $\sigma$ & $\tau$ & $\mu$ \\
    \midrule
    \multicolumn{7}{@{}l}{\textit{Score-free}} \\[2pt]
    ETD-UB-Mom (IS) & 0.132   & 103.3 & 0.370 & 0.0014 & 0.031 & 0.339 \\
    ETD-UB-Euc      & 0.166   & 0.150 & 0.077 & 0.0043 & 0.309 & ---   \\
    ETD-SR-Mom      & 0.037   & 0.037 & 0.340 & 0.0016 & ---   & 0.423 \\
    \midrule
    \multicolumn{7}{@{}l}{\textit{Score-guided}} \\[2pt]
    ETD-SR-Euc (IS) & 0.00005 & 0.018 & 0.613 & 0.0017 & ---   & ---   \\
    ETD-BAL-Mom     & 0.0002  & 0.378 & 0.311 & 0.0008 & ---   & 0.303 \\
    ETD-SR-Euc      & 0.0013  & 110.8 & 0.088 & 0.067 & ---   & ---   \\
    \midrule
    \multicolumn{7}{@{}l}{\textit{Baselines}} \\[2pt]
    SVGD     & \multicolumn{6}{l}{step size $= 0.0272$} \\
    SGLD     & \multicolumn{6}{l}{$a = 6.58 \times 10^{-4}$,\; $b = 1.0$,\; $\gamma = 0.55$} \\
    AGF-SVGD & \multicolumn{6}{l}{lr $= 0.862$,\; annealing stages $= 85$,\; smoothing bw $= 0.136$,\; base scale $= 144.6$} \\
    \bottomrule
  \end{tabular}
  \label{tab-app:blr_german_tuned}
\end{table}

\begin{table}[h]
  \caption{BLR German Credit: Results for the top 3 score-free and top 3 score-guided ETD variants. NLL: negative log-likelihood (mean $\pm$ SE, 20 splits). Accuracy (mean $\pm$ SE, 20 splits).}
  \centering
  \small
  \begin{tabular}{@{}l cc@{}}
    \toprule
    \textbf{Method} & NLL $\downarrow$ & Accuracy $\uparrow$ \\
    \midrule
    \multicolumn{3}{@{}l}{\textit{Score-free}} \\[1pt]
    ETD-UB-Mom (IS)   & $0.4779 \pm 0.0111$ & $0.7770 \pm 0.0115$ \\
    ETD-UB-Euc        & $0.4783 \pm 0.0116$ & $0.7825 \pm 0.0094$ \\
    ETD-SR-Mom        & $0.4798 \pm 0.0122$ & $0.7795 \pm 0.0102$ \\
    \multicolumn{3}{@{}l}{\textit{Score-guided}} \\[1pt]
    ETD-SR-Euc (IS)   & $0.4768 \pm 0.0116$ & $0.7765 \pm 0.0108$ \\
    ETD-BAL-Mom       & $0.4777 \pm 0.0112$ & $0.7790 \pm 0.0107$ \\
    ETD-SR-Euc        & $0.4805 \pm 0.0108$ & $0.7715 \pm 0.0089$ \\
    \multicolumn{3}{@{}l}{\textit{Baselines}} \\[1pt]
    SVGD              & $0.4772 \pm 0.0111$ & $0.7780 \pm 0.0107$ \\
    SGLD              & $0.4765 \pm 0.0116$ & $0.7765 \pm 0.0107$ \\
    AGF-SVGD          & $0.5032 \pm 0.0135$ & $0.7710 \pm 0.0101$ \\
    \bottomrule
  \end{tabular}
  \label{tab-app:blr_german_results}
\end{table}

\subsubsection{Covertype}

\textbf{Tuning.} All methods are tuned on a fixed 70/10/20 train/validation/test split of Covertype with validation NLL as the tuning objective, averaged over 3 seeds per trial. Each method trains on the full 70\% training split, evaluates predictive metrics on the 20\% test split, and uses the validation split only for hyperparameter tuning. ETD variants and AGF-SVGD are tuned with 200 Optuna TPE trials each while SVGD and SGLD use 50 trials. Since full-data NUTS is computationally infeasible for Covertype, we construct a subsampled NUTS reference: for each of the 20 evaluation splits, we draw a 20K-point subsample from the training portion and run NUTS with 10 chains, 2000 warmup steps, and 2000 samples per chain, producing 20K posterior samples per split. The hyperparameters are provided in Table~\ref{tab-app:blr_covertype_tuned}.

\begin{table}[h]
  \caption{BLR Covertype: Tuned hyperparameters for selected score-free and score-guided ETD variants. All use $N=100$ particles and 5000 iterations with minibatch size 100 (except AGF-SVGD uses full-batch). For ETD, $M=500$ proposals.}
  \centering
  \small
  \begin{tabular}{@{}l cccccc@{}}
    \toprule
    \textbf{Method} & $\epsilon$ & $\beta$ & $\alpha$ & $\sigma$ & $\tau$ & $\mu$ \\
    \midrule
    \multicolumn{7}{@{}l}{\textit{Score-free}} \\[2pt]
    ETD-SR-Euc        & 2.865                & 0.013 & 0.550 & 0.0029 & --- & ---   \\
    ETD-BAL-Mom (IS)  & 1.569                & 35.03 & 0.391 & 0.0024 & --- & 0.017 \\
    ETD-BAL-Maha (IS) & 7.390                & 36.36 & 0.304 & 0.0025 & --- & ---   \\
    \midrule
    \multicolumn{7}{@{}l}{\textit{Score-guided}} \\[2pt]
    ETD-SR-Maha      & $3.6 \times 10^{-7}$ & 28.69 & 0.829 & 0.025 & --- & --- \\
    ETD-SR-Euc       & $3.6 \times 10^{-7}$ & 403.4 & 0.792 & 0.029 & --- & --- \\
    ETD-SR-Maha (IS) & $3.0 \times 10^{-7}$ & 18.13 & 0.704 & 0.016 & --- & --- \\
    \midrule
    \multicolumn{7}{@{}l}{\textit{Baselines}} \\[2pt]
    SVGD     & \multicolumn{6}{l}{step size $= 0.0484$} \\
    SGLD     & \multicolumn{6}{l}{$a = 4.59 \times 10^{-6}$,\; $b = 1.0$,\; $\gamma = 0.55$} \\
    AGF-SVGD & \multicolumn{6}{l}{lr $= 0.0984$,\; annealing stages $= 109$,\; smoothing bw $= 0.287$,\; base scale $= 136.7$} \\
    \bottomrule
  \end{tabular}
  \label{tab-app:blr_covertype_tuned}
\end{table}

\begin{table}[h]
  \caption{BLR Covertype: Results for the top 3 score-free and top 3 score-guided ETD variants. NLL: negative log-likelihood; Acc: accuracy; AUC: area under ROC curve; Brier: Brier score (all mean $\pm$ SE, 20 splits). Cov$_{90}$: marginal coverage of the particle 90\% credible intervals against the NUTS reference posterior, averaged over dimensions and splits (nominal $=0.9$).}
  \centering
  \scriptsize
  \setlength{\tabcolsep}{2.5pt}
  \begin{tabular}{@{}l ccccc@{}}
    \toprule
    \textbf{Method} & NLL $\downarrow$ & Acc $\uparrow$ & AUC $\uparrow$ & Brier $\downarrow$ & Cov$_{90}$ \\
    \midrule
    \multicolumn{6}{@{}l}{\textit{Score-free}} \\[1pt]
    ETD-SR-Euc        & $0.5183 \pm 0.0004$ & $0.7528 \pm 0.0005$ & $0.8246 \pm 0.0002$ & $0.1711 \pm 0.0001$ & $0.024$ \\
    ETD-BAL-Mom (IS)  & $0.5184 \pm 0.0002$ & $0.7534 \pm 0.0005$ & $0.8245 \pm 0.0002$ & $0.1713 \pm 0.0001$ & $0.000$ \\
    ETD-BAL-Maha (IS) & $0.5187 \pm 0.0002$ & $0.7535 \pm 0.0005$ & $0.8244 \pm 0.0002$ & $0.1714 \pm 0.0001$ & $0.000$ \\
    \multicolumn{6}{@{}l}{\textit{Score-guided}} \\[1pt]
    ETD-SR-Maha       & $0.5147 \pm 0.0003$ & $0.7549 \pm 0.0007$ & $0.8263 \pm 0.0002$ & $0.1703 \pm 0.0001$ & $0.986$ \\
    ETD-SR-Euc        & $0.5150 \pm 0.0002$ & $0.7557 \pm 0.0004$ & $0.8264 \pm 0.0001$ & $0.1701 \pm 0.0001$ & $0.986$ \\
    ETD-SR-Maha (IS)  & $0.5150 \pm 0.0002$ & $0.7545 \pm 0.0005$ & $0.8261 \pm 0.0002$ & $0.1704 \pm 0.0001$ & $0.979$ \\
    \multicolumn{6}{@{}l}{\textit{Baselines}} \\[1pt]
    SVGD              & $0.5144 \pm 0.0002$ & $0.7554 \pm 0.0003$ & $0.8266 \pm 0.0002$ & $0.1703 \pm 0.0001$ & $0.323$ \\
    SGLD              & $0.5145 \pm 0.0002$ & $0.7556 \pm 0.0002$ & $0.8266 \pm 0.0001$ & $0.1703 \pm 0.0001$ & $0.277$ \\
    AGF-SVGD          & $0.5149 \pm 0.0003$ & $0.7559 \pm 0.0002$ & $0.8266 \pm 0.0002$ & $0.1703 \pm 0.0001$ & $0.201$ \\
    \bottomrule
  \end{tabular}
  \label{tab-app:blr_covertype_results}
\end{table}

\subsection{Bayesian Neural Networks}

\textbf{Description.} Following the standard BNN regression setup used by \citet{Liu16}, we use a one-hidden-layer ReLU network with 50 hidden units and Gamma hyperpriors on the noise and weight precisions. Our implementation uses $\mathrm{Gamma}(1, 0.01)$ for both precisions and full-batch likelihood evaluation. We evaluate on 9 of the 10 standard UCI datasets, excluding Year.

\textbf{Tuning.} All methods are tuned on the Boston Housing dataset on a single 90/10 train/test split with NLL as the tuning objective. ETD variants and AGF-SVGD are tuned with 200 Optuna TPE trials each while SVGD and SGLD use 50 trials. Tuned hyperparameters are applied to all 9 datasets without per-dataset retuning. The hyperparameters are provided in Table \ref{tab-app:bnn_tuned}.

\begin{table}[h]
  \caption{BNN UCI: Tuned hyperparameters for ETD variants shown in Tables~\ref{tab-app:bnn_results_1}--\ref{tab-app:bnn_results_2}. All use $N=100$ particles and 2000 iterations. For ETD, $M=500$ proposals.}
  \centering
  \scriptsize
  \begin{tabular}{@{}l cccccc@{}}
    \toprule
    \textbf{Method} & $\epsilon$ & $\beta$ & $\alpha$ & $\sigma$ & $\tau$ & $\mu$ \\
    \midrule
    \multicolumn{7}{@{}l}{\textit{Score-free}} \\[2pt]
    ETD-BAL-Euc       & 0.0045 & 0.165 & 0.318 & 0.012 & ---   & ---   \\
    ETD-BAL-Maha (IS) & 1.048  & 5.591 & 0.286 & 0.009 & ---   & ---   \\
    ETD-BAL-Mom       & 0.0706 & 0.102 & 0.248 & 0.005 & ---   & 0.385 \\
    ETD-SR-Euc        & 0.343  & 0.126 & 0.313 & 0.009 & ---   & ---   \\
    ETD-SR-Mom        & 0.780  & 0.208 & 0.236 & 0.013 & ---   & 0.098 \\
    ETD-UB-Euc        & 3.217  & 0.149 & 0.419 & 0.011 & 22.70 & ---   \\
    ETD-UB-Mom        & 2.784  & 0.133 & 0.345 & 0.009 & 4.025 & 0.167 \\
    \midrule
    \multicolumn{7}{@{}l}{\textit{Score-guided}} \\[2pt]
    ETD-SR-Euc        & 0.0001 & 124.0 & 0.444 & 0.023 & ---   & ---   \\
    ETD-SR-Euc (IS)   & 0.0001 & 0.151 & 0.885 & 0.010 & ---   & ---   \\
    ETD-SR-Maha       & 0.0001 & 35.14 & 0.488 & 0.016 & ---   & ---   \\
    ETD-SR-Maha (IS)  & 0.0001 & 0.018 & 0.737 & 0.011 & ---   & ---   \\
    ETD-SR-Mom        & 0.0001 & 71.37 & 0.834 & 0.009 & ---   & 0.462 \\
    ETD-SR-Mom (IS)   & 0.0001 & 63.48 & 0.665 & 0.010 & ---   & 0.407 \\
    \midrule
    \multicolumn{7}{@{}l}{\textit{Baselines}} \\[2pt]
    SVGD     & \multicolumn{6}{l}{step size $= 0.0349$} \\
    SGLD     & \multicolumn{6}{l}{$a = 0.00159$,\; $b = 1.0$,\; $\gamma = 0.55$} \\
    AGF-SVGD & \multicolumn{6}{l}{lr $= 0.00304$,\; annealing stages $= 65$,\; smoothing bw $= 0.086$,\; base scale $= 141.3$} \\
    \bottomrule
  \end{tabular}
  \label{tab-app:bnn_tuned}
\end{table}

\begin{table}[h]
  \caption{BNN UCI (Part 1): Results for the top 3 score-free and top 3 score-guided ETD variants per dataset. NLL: test negative log-likelihood; RMSE: root mean squared error (both mean $\pm$ SE, 20 train/test splits except Protein, which uses 5 splits.).}
  \centering
  \scriptsize
  \setlength{\tabcolsep}{2.5pt}

  \begin{minipage}[t]{0.48\textwidth}\centering
  \textbf{Boston}\\[3pt]
  \begin{tabular}{@{}l cc@{}}
    \toprule
    \textbf{Method} & NLL & RMSE \\
    \midrule
    \multicolumn{3}{@{}l}{\textit{Score-free}} \\[1pt]
    ETD-UB-Mom       & $2.621 \pm 0.071$ & $3.186 \pm 0.148$ \\
    ETD-BAL-Mom      & $2.625 \pm 0.096$ & $3.115 \pm 0.189$ \\
    ETD-BAL-Euc      & $2.625 \pm 0.073$ & $3.186 \pm 0.156$ \\
    \multicolumn{3}{@{}l}{\textit{Score-guided}} \\[1pt]
    ETD-SR-Mom (IS)  & $2.495 \pm 0.040$ & $2.895 \pm 0.171$ \\
    ETD-SR-Mom       & $2.498 \pm 0.055$ & $2.913 \pm 0.176$ \\
    ETD-SR-Maha (IS) & $2.530 \pm 0.043$ & $3.006 \pm 0.177$ \\
    \multicolumn{3}{@{}l}{\textit{Baselines}} \\[1pt]
    SVGD     & $2.492 \pm 0.085$ & $2.776 \pm 0.160$ \\
    SGLD     & $2.513 \pm 0.056$ & $2.938 \pm 0.174$ \\
    AGF-SVGD & $3.419 \pm 0.014$ & $5.761 \pm 0.268$ \\
    \bottomrule
  \end{tabular}
  \end{minipage}%
  \hfill
  \begin{minipage}[t]{0.48\textwidth}\centering
  \textbf{Concrete}\\[3pt]
  \begin{tabular}{@{}l cc@{}}
    \toprule
    \textbf{Method} & NLL & RMSE \\
    \midrule
    \multicolumn{3}{@{}l}{\textit{Score-free}} \\[1pt]
    ETD-BAL-Mom     & $3.095 \pm 0.022$ & $5.295 \pm 0.101$ \\
    ETD-SR-Euc      & $3.149 \pm 0.019$ & $5.605 \pm 0.091$ \\
    ETD-UB-Euc      & $3.159 \pm 0.032$ & $5.647 \pm 0.152$ \\
    \multicolumn{3}{@{}l}{\textit{Score-guided}} \\[1pt]
    ETD-SR-Mom      & $3.075 \pm 0.012$ & $5.077 \pm 0.111$ \\
    ETD-SR-Mom (IS) & $3.108 \pm 0.011$ & $5.139 \pm 0.111$ \\
    ETD-SR-Euc (IS) & $3.151 \pm 0.010$ & $5.398 \pm 0.105$ \\
    \multicolumn{3}{@{}l}{\textit{Baselines}} \\[1pt]
    SVGD     & $3.013 \pm 0.023$ & $4.945 \pm 0.111$ \\
    SGLD     & $3.047 \pm 0.014$ & $5.115 \pm 0.106$ \\
    AGF-SVGD & $4.072 \pm 0.009$ & $12.647 \pm 0.251$ \\
    \bottomrule
  \end{tabular}
  \end{minipage}

  \vspace{10pt}

  \begin{minipage}[t]{0.48\textwidth}\centering
  \textbf{Energy}\\[3pt]
  \begin{tabular}{@{}l cc@{}}
    \toprule
    \textbf{Method} & NLL & RMSE \\
    \midrule
    \multicolumn{3}{@{}l}{\textit{Score-free}} \\[1pt]
    ETD-BAL-Mom     & $1.217 \pm 0.018$ & $0.772 \pm 0.019$ \\
    ETD-SR-Euc      & $1.418 \pm 0.021$ & $0.987 \pm 0.022$ \\
    ETD-UB-Mom      & $1.468 \pm 0.029$ & $1.031 \pm 0.030$ \\
    \multicolumn{3}{@{}l}{\textit{Score-guided}} \\[1pt]
    ETD-SR-Mom      & $2.237 \pm 0.008$ & $1.744 \pm 0.029$ \\
    ETD-SR-Mom (IS) & $2.254 \pm 0.008$ & $1.681 \pm 0.031$ \\
    ETD-SR-Euc (IS) & $2.272 \pm 0.008$ & $1.714 \pm 0.038$ \\
    \multicolumn{3}{@{}l}{\textit{Baselines}} \\[1pt]
    SVGD     & $1.756 \pm 0.004$ & $0.761 \pm 0.021$ \\
    SGLD     & $1.991 \pm 0.006$ & $1.226 \pm 0.031$ \\
    AGF-SVGD & $3.462 \pm 0.008$ & $5.389 \pm 0.189$ \\
    \bottomrule
  \end{tabular}
  \end{minipage}%
  \hfill
  \begin{minipage}[t]{0.48\textwidth}\centering
  \textbf{Kin8nm}\\[3pt]
  \begin{tabular}{@{}l cc@{}}
    \toprule
    \textbf{Method} & NLL & RMSE \\
    \midrule
    \multicolumn{3}{@{}l}{\textit{Score-free}} \\[1pt]
    ETD-BAL-Mom     & $-0.936 \pm 0.009$ & $0.095 \pm 0.001$ \\
    ETD-SR-Euc      & $-0.852 \pm 0.008$ & $0.103 \pm 0.001$ \\
    ETD-UB-Mom      & $-0.851 \pm 0.007$ & $0.103 \pm 0.001$ \\
    \multicolumn{3}{@{}l}{\textit{Score-guided}} \\[1pt]
    ETD-SR-Euc (IS) & $-0.388 \pm 0.014$ & $0.136 \pm 0.001$ \\
    ETD-SR-Mom      & $-0.311 \pm 0.009$ & $0.149 \pm 0.002$ \\
    ETD-SR-Mom (IS) & $-0.307 \pm 0.016$ & $0.145 \pm 0.002$ \\
    \multicolumn{3}{@{}l}{\textit{Baselines}} \\[1pt]
    SVGD     & $-1.087 \pm 0.005$ & $0.082 \pm 0.001$ \\
    SGLD     & $-0.220 \pm 0.013$ & $122.0 \pm 18.3$ \\
    AGF-SVGD & $-0.034 \pm 0.007$ & $0.221 \pm 0.003$ \\
    \bottomrule
  \end{tabular}
  \end{minipage}
  \label{tab-app:bnn_results_1}
\end{table}

\begin{table}[h]
  \caption{BNN UCI (Part 2): Continued from Table~\ref{tab-app:bnn_results_1}.}
  \centering
  \scriptsize
  \setlength{\tabcolsep}{2.5pt}

  \begin{minipage}[t]{0.48\textwidth}\centering
  \textbf{Naval}\\[3pt]
  \begin{tabular}{@{}l cc@{}}
    \toprule
    \textbf{Method} & NLL & RMSE \\
    \midrule
    \multicolumn{3}{@{}l}{\textit{Score-free}} \\[1pt]
    ETD-BAL-Mom       & $-5.110 \pm 0.017$ & $0.001 \pm 0.000$ \\
    ETD-SR-Euc        & $-4.965 \pm 0.018$ & $0.002 \pm 0.000$ \\
    ETD-BAL-Maha (IS) & $-4.920 \pm 0.015$ & $0.002 \pm 0.000$ \\
    \multicolumn{3}{@{}l}{\textit{Score-guided}} \\[1pt]
    ETD-SR-Euc (IS)   & $-2.899 \pm 0.003$ & $0.014 \pm 0.001$ \\
    ETD-SR-Maha (IS)  & $-2.875 \pm 0.007$ & $0.017 \pm 0.003$ \\
    ETD-SR-Mom (IS)   & $-2.782 \pm 0.009$ & $0.034 \pm 0.006$ \\
    \multicolumn{3}{@{}l}{\textit{Baselines}} \\[1pt]
    SVGD     & $-3.780 \pm 0.006$ & $0.006 \pm 0.000$ \\
    SGLD     & $3.007 \pm 0.346$ & $3.12{\times}10^4 \pm 5.34{\times}10^3$ \\
    AGF-SVGD & $-2.796 \pm 0.002$ & $0.015 \pm 0.000$ \\
    \bottomrule
  \end{tabular}
  \end{minipage}%
  \hfill
  \begin{minipage}[t]{0.48\textwidth}\centering
  \textbf{Power}\\[3pt]
  \begin{tabular}{@{}l cc@{}}
    \toprule
    \textbf{Method} & NLL & RMSE \\
    \midrule
    \multicolumn{3}{@{}l}{\textit{Score-free}} \\[1pt]
    ETD-BAL-Mom      & $2.805 \pm 0.012$ & $3.987 \pm 0.048$ \\
    ETD-UB-Mom       & $2.825 \pm 0.012$ & $4.068 \pm 0.048$ \\
    ETD-SR-Euc       & $2.825 \pm 0.012$ & $4.072 \pm 0.050$ \\
    \multicolumn{3}{@{}l}{\textit{Score-guided}} \\[1pt]
    ETD-SR-Maha (IS) & $3.931 \pm 0.040$ & $9.552 \pm 0.406$ \\
    ETD-SR-Euc (IS)  & $4.112 \pm 0.126$ & $15.15 \pm 1.87$ \\
    ETD-SR-Maha      & $4.930 \pm 0.345$ & $220.2 \pm 131.3$ \\
    \multicolumn{3}{@{}l}{\textit{Baselines}} \\[1pt]
    SVGD     & $2.835 \pm 0.008$ & $4.025 \pm 0.046$ \\
    SGLD     & $9.290 \pm 0.315$ & $1.29{\times}10^7 \pm 1.06{\times}10^6$ \\
    AGF-SVGD & $3.959 \pm 0.016$ & $9.199 \pm 0.535$ \\
    \bottomrule
  \end{tabular}
  \end{minipage}

  \vspace{10pt}

  \begin{minipage}[t]{0.48\textwidth}\centering
  \textbf{Protein}\\[3pt]
  \begin{tabular}{@{}l cc@{}}
    \toprule
    \textbf{Method} & NLL & RMSE \\
    \midrule
    \multicolumn{3}{@{}l}{\textit{Score-free}} \\[1pt]
    ETD-BAL-Mom       & $2.936 \pm 0.003$ & $4.557 \pm 0.014$ \\
    ETD-SR-Euc        & $2.951 \pm 0.001$ & $4.630 \pm 0.007$ \\
    ETD-UB-Mom        & $2.955 \pm 0.002$ & $4.648 \pm 0.011$ \\
    \multicolumn{3}{@{}l}{\textit{Score-guided}} \\[1pt]
    ETD-SR-Euc        & $3.238 \pm 0.051$ & $5.882 \pm 0.173$ \\
    ETD-SR-Maha (IS)  & $4.299 \pm 0.729$ & $80.4 \pm 65.3$ \\
    ETD-SR-Maha       & $5.157 \pm 0.851$ & $321.2 \pm 269.3$ \\
    \multicolumn{3}{@{}l}{\textit{Baselines}} \\[1pt]
    SVGD     & $2.937 \pm 0.002$ & $4.565 \pm 0.008$ \\
    SGLD     & $16.986 \pm 0.764$ & $1.52{\times}10^{10} \pm 6.87{\times}10^9$ \\
    AGF-SVGD & $3.193 \pm 0.006$ & $5.832 \pm 0.047$ \\
    \bottomrule
  \end{tabular}
  \end{minipage}%
  \hfill
  \begin{minipage}[t]{0.48\textwidth}\centering
  \textbf{Wine}\\[3pt]
  \begin{tabular}{@{}l cc@{}}
    \toprule
    \textbf{Method} & NLL & RMSE \\
    \midrule
    \multicolumn{3}{@{}l}{\textit{Score-free}} \\[1pt]
    ETD-BAL-Mom     & $0.958 \pm 0.011$ & $0.630 \pm 0.007$ \\
    ETD-UB-Euc      & $0.963 \pm 0.011$ & $0.633 \pm 0.007$ \\
    ETD-SR-Mom      & $0.963 \pm 0.011$ & $0.634 \pm 0.007$ \\
    \multicolumn{3}{@{}l}{\textit{Score-guided}} \\[1pt]
    ETD-SR-Mom (IS) & $0.950 \pm 0.010$ & $0.628 \pm 0.007$ \\
    ETD-SR-Mom      & $0.953 \pm 0.010$ & $0.629 \pm 0.007$ \\
    ETD-SR-Euc (IS) & $0.954 \pm 0.010$ & $0.629 \pm 0.007$ \\
    \multicolumn{3}{@{}l}{\textit{Baselines}} \\[1pt]
    SVGD     & $0.954 \pm 0.014$ & $0.614 \pm 0.007$ \\
    SGLD     & $0.953 \pm 0.010$ & $0.629 \pm 0.007$ \\
    AGF-SVGD & $1.093 \pm 0.008$ & $0.684 \pm 0.010$ \\
    \bottomrule
  \end{tabular}
  \end{minipage}

  \vspace{10pt}

  \begin{minipage}[t]{0.48\textwidth}\centering
  \textbf{Yacht}\\[3pt]
  \begin{tabular}{@{}l cc@{}}
    \toprule
    \textbf{Method} & NLL & RMSE \\
    \midrule
    \multicolumn{3}{@{}l}{\textit{Score-free}} \\[1pt]
    ETD-BAL-Mom     & $1.554 \pm 0.055$ & $1.071 \pm 0.081$ \\
    ETD-SR-Euc      & $1.560 \pm 0.072$ & $1.160 \pm 0.074$ \\
    ETD-UB-Mom      & $1.588 \pm 0.070$ & $1.185 \pm 0.078$ \\
    \multicolumn{3}{@{}l}{\textit{Score-guided}} \\[1pt]
    ETD-SR-Mom      & $1.975 \pm 0.010$ & $1.084 \pm 0.078$ \\
    ETD-SR-Mom (IS) & $2.095 \pm 0.011$ & $1.088 \pm 0.073$ \\
    ETD-SR-Euc (IS) & $2.119 \pm 0.008$ & $1.207 \pm 0.071$ \\
    \multicolumn{3}{@{}l}{\textit{Baselines}} \\[1pt]
    SVGD     & $2.144 \pm 0.011$ & $1.268 \pm 0.082$ \\
    SGLD     & $1.832 \pm 0.009$ & $0.929 \pm 0.066$ \\
    AGF-SVGD & $3.973 \pm 0.015$ & $11.303 \pm 0.451$ \\
    \bottomrule
  \end{tabular}
  \end{minipage}
  \label{tab-app:bnn_results_2}
\end{table}

\subsection{Molecular Boltzmann Distribution}

\subsubsection{Double-Well 4 (DW-4)}

\textbf{Description.} The DW-4 benchmark consists of 4 particles in 2D ($d=8$) interacting via a pairwise double-well potential: $V_{\text{DW}}(d_{ij}) = a(d_{ij} - d_0)^4 + b(d_{ij} - d_0)^2$, where $d_{ij}$ is the Euclidean distance between particles $i$ and $j$, with parameters $a=0.9$, $b=-4$, $d_0=4$. The total energy is $U(x) = \sum_{i<j}  V_{\text{DW}}(d_{ij})$ and the target distribution is $\pi(x) \propto \exp(-U(x)/k_{B} T)$ at unit temperature $k_{B} T=1$. Following the standard translation-invariant formulation, we subtract the center of mass of each configuration before evaluating pairwise distances and the energy.

\textbf{Tuning.} All methods are tuned via Optuna TPE  by minimizing the total variation distance between the pairwise distance histograms of sampled and reference configurations, computed against the validation split. Each trial evaluates the objective averaged over 3 seeds to reduce tuning variance. ETD methods and AGF-SVGD are tuned with 200 trials each while SVGD and SGLD use 50 trials due to their smaller search spaces. Both tuned and fixed hyperparameters  for the top 3 score-free and score guided ETD variants are provided in Table \ref{tab-app:dw4_tuned}.

\subsubsection{Lennard-Jones-13 (LJ-13)}

\textbf{Description.} The LJ-13 benchmark consists of 13 atoms in 3D  ($d=39$) interacting via the Lennard-Jones potential with harmonic confinement: 

$$U(x) = \lambda \sum_{i<j} \varepsilon\left[\left(\frac{r_m}{r_{ij}}\right)^{12} - 2\left(\frac{r_m}{r_{ij}}\right)^{6}\right] + \frac{1}{2}\kappa\sum_i \|x_i -  x_{\text{cm}}\|^2$$ 

  with $\varepsilon=1$, $r_m=1$, $\kappa=1$, $k_B T=1$, and $\lambda=2$ following the DEM \citep{akhound2024iterated} convention. The target distribution is $\pi(x) \propto \exp(-U(x)/(k_B T)) = \exp(-U(x))$. The system has a complex energy landscape and many local minima, making it a substantially harder multimodal target than DW-4. As with DW-4, we subtract the center of mass before evaluating distances and the energy.

\textbf{Reference data.} MCMC reference samples were obtained from the DEM repository: 100K training, 10K validation, and 10K test samples.

\textbf{Tuning.} We follow the exact same tuning strategy as the DW-4 experiment except LJ-13 uses 5000 iterations. The hyperparameters are provided in Table \ref{tab-app:lj13_tuned}.

\begin{table}[h]
  \caption{DW-4: Tuned hyperparameters for the top 3 score-free and top 3 score-guided ETD variants (filtered to ETD variants with 20/20 energy-stable seeds). All use $N=100$ particles and 2000 iterations. For ETD, $M=500$ proposals.}
  \centering
  \small
  \begin{tabular}{@{}l cccccc@{}}
    \toprule
    \textbf{Method} & $\epsilon$ & $\beta$ & $\alpha$ & $\sigma$ & $\tau$ & $\mu$ \\
    \midrule
    \multicolumn{7}{@{}l}{\textit{Score-free}} \\[2pt]
    ETD-UB-Euc   & 0.0071 & 0.524 & 0.728 & 0.171 & 0.019 & ---   \\
    ETD-UB-Mom   & 0.708  & 1.584 & 0.900 & 0.229 & 0.462 & 0.011 \\
    ETD-BAL-Maha & 0.421  & 0.267 & 0.784 & 0.139 & ---   & ---   \\
    \midrule
    \multicolumn{7}{@{}l}{\textit{Score-guided}} \\[2pt]
    ETD-SR-Euc  & 0.0068 & 0.013 & 0.942 & 0.093 & ---   & ---   \\
    ETD-UB-Maha & 0.0061 & 0.205 & 0.644 & 0.124 & 0.014 & ---   \\
    ETD-UB-Euc  & 0.0211 & 0.074 & 0.912 & 0.172 & 0.056 & ---   \\
    \midrule
    \multicolumn{7}{@{}l}{\textit{Baselines}} \\[2pt]
    SVGD     & \multicolumn{6}{l}{step size $= 0.0349$} \\
    SGLD     & \multicolumn{6}{l}{$a = 0.0185$,\; $b = 1.0$,\; $\gamma = 0.55$} \\
    AGF-SVGD & \multicolumn{6}{l}{lr $= 0.440$,\; annealing stages $= 431$,\; smoothing bw $= 0.074$,\; base scale $= 1.83$} \\
    \bottomrule
  \end{tabular}
  \label{tab-app:dw4_tuned}
\end{table}

\begin{table}[h]
  \caption{DW-4: Results for the top 3 score-free and top 3 score-guided ETD variants (filtered to ETD variants with 20/20 energy-stable seeds). TV: total variation of interatomic distance distributions; $E_\text{dist}$: energy distance (both mean $\pm$ SE, 20 seeds). $\bar{E}$: mean energy of sampled configurations (reference $\bar{E}_\text{ref} = -22.45$).\textsuperscript{$\ast$}AGF-SVGD has high-energy failures in 4/20 seeds.}
  \centering
  \small
  \begin{tabular}{@{}l ccc@{}}
    \toprule
    \textbf{Method} & TV $\downarrow$ & $E_\text{dist}$ $\downarrow$ & $\bar{E}$ \\
    \midrule
    \multicolumn{4}{@{}l}{\textit{Score-free}} \\[1pt]
    ETD-UB-Euc   & $0.193 \pm 0.006$ & $0.587 \pm 0.064$ & $-22.24 \pm 0.06$ \\
    ETD-UB-Mom   & $0.210 \pm 0.006$ & $0.923 \pm 0.098$ & $-23.00 \pm 0.11$ \\
    ETD-BAL-Maha & $0.218 \pm 0.004$ & $1.340 \pm 0.049$ & $-22.64 \pm 0.08$ \\
    \multicolumn{4}{@{}l}{\textit{Score-guided}} \\[1pt]
    ETD-SR-Euc   & $0.173 \pm 0.004$ & $0.256 \pm 0.039$ & $-21.90 \pm 0.09$ \\
    ETD-UB-Maha  & $0.181 \pm 0.004$ & $0.361 \pm 0.040$ & $-21.97 \pm 0.06$ \\
    ETD-UB-Euc   & $0.183 \pm 0.004$ & $0.346 \pm 0.062$ & $-21.58 \pm 0.06$ \\
    \multicolumn{4}{@{}l}{\textit{Baselines}} \\[1pt]
    SVGD                            & $0.216 \pm 0.003$ & $0.233 \pm 0.019$ & $-22.29 \pm 0.18$ \\
    SGLD                            & $0.162 \pm 0.003$ & $0.184 \pm 0.017$ & $-22.17 \pm 0.05$ \\
    AGF-SVGD\textsuperscript{$\ast$} & $0.531 \pm 0.005$ & $0.616 \pm 0.035$ & $-2.97 \pm 0.76$ \\
    \bottomrule
  \end{tabular}
  \label{tab-app:dw4_results}
\end{table}

\begin{table}[h]
  \caption{LJ-13: Tuned hyperparameters for the top 3 score-free and top 3 score-guided ETD variants (filtered to ETD variants with 20/20 energy-stable seeds). All use $N=100$ particles and 5000 iterations. For ETD, $M=500$ proposals.}
  \centering
  \small
  \begin{tabular}{@{}l cccccc@{}}
    \toprule
    \textbf{Method} & $\epsilon$ & $\beta$ & $\alpha$ & $\sigma$ & $\tau$ & $\mu$ \\
    \midrule
    \multicolumn{7}{@{}l}{\textit{Score-free}} \\[2pt]
    ETD-BAL-Euc & 0.141  & 0.057 & 0.929 & 0.019 & --- & ---   \\
    ETD-SR-Euc  & 0.039  & 0.101 & 0.947 & 0.017 & --- & ---   \\
    ETD-SR-Maha & 0.038  & 0.100 & 0.856 & 0.019 & --- & ---   \\
    \midrule
    \multicolumn{7}{@{}l}{\textit{Score-guided}} \\[2pt]
    ETD-SR-Maha & 0.0004 & 0.148 & 0.735 & 0.021 & --- & ---   \\
    ETD-SR-Euc  & 0.0004 & 0.161 & 0.908 & 0.024 & --- & ---   \\
    ETD-SR-Mom  & 0.0007 & 0.001 & 0.827 & 0.027 & --- & 0.252 \\
    \midrule
    \multicolumn{7}{@{}l}{\textit{Baselines}} \\[2pt]
    SVGD     & \multicolumn{6}{l}{step size $= 6.94 \times 10^{-5}$} \\
    SGLD     & \multicolumn{6}{l}{$a = 1.00 \times 10^{-8}$,\; $b = 1.0$,\; $\gamma = 0.55$} \\
    AGF-SVGD & \multicolumn{6}{l}{lr $= 5.46 \times 10^{-5}$,\; annealing stages $= 238$,\; smoothing bw $= 0.112$,\; base scale $= 101.3$} \\
    \bottomrule
  \end{tabular}
  \label{tab-app:lj13_tuned}
\end{table}

\begin{table}[h]
  \caption{LJ-13: Results for the top 3 score-free and top 3 score-guided ETD variants (filtered to ETD variants with 20/20 energy-stable seeds). TV: total variation of interatomic distance distributions; $E_\text{dist}$: energy distance (both mean $\pm$ SE, 20 seeds). $\bar{E}$: mean energy of sampled configurations (reference $\bar{E}_\text{ref} = -43.13$). Entries marked div. indicate numerically divergent/high-energy samples.\textsuperscript{$\ast$}All 20 seeds produced divergent energies.}
  \centering
  \small
  \begin{tabular}{@{}l ccc@{}}
    \toprule
    \textbf{Method} & TV $\downarrow$ & $E_\text{dist}$ $\downarrow$ & $\bar{E}$ \\
    \midrule
    \multicolumn{4}{@{}l}{\textit{Score-free}} \\[1pt]
    ETD-BAL-Euc & $0.113 \pm 0.004$ & $0.530 \pm 0.062$ & $-40.01 \pm 0.70$ \\
    ETD-SR-Euc  & $0.122 \pm 0.006$ & $0.772 \pm 0.071$ & $-41.16 \pm 1.10$ \\
    ETD-SR-Maha & $0.128 \pm 0.007$ & $0.856 \pm 0.106$ & $-42.24 \pm 1.08$ \\
    \multicolumn{4}{@{}l}{\textit{Score-guided}} \\[1pt]
    ETD-SR-Maha & $0.053 \pm 0.001$ & $0.058 \pm 0.012$ & $-44.43 \pm 0.26$ \\
    ETD-SR-Euc  & $0.054 \pm 0.001$ & $0.036 \pm 0.005$ & $-41.35 \pm 0.18$ \\
    ETD-SR-Mom  & $0.054 \pm 0.001$ & $0.032 \pm 0.004$ & $-43.30 \pm 0.21$ \\
    \multicolumn{4}{@{}l}{\textit{Baselines}} \\[1pt]
    SVGD\textsuperscript{$\ast$}     & $0.394 \pm 0.002$ & $8.220 \pm 0.093$ & div. \\
    SGLD\textsuperscript{$\ast$}     & $0.422 \pm 0.002$ & div. & div. \\
    AGF-SVGD\textsuperscript{$\ast$} & $0.394 \pm 0.002$ & $8.232 \pm 0.093$ & div. \\
    \bottomrule
  \end{tabular}
  \label{tab-app:lj13_results}
\end{table}

\clearpage

\end{document}